\newcolumntype{L}[1]{>{\raggedright\let\newline\\\arraybackslash\hspace{0pt}}m{#1}}
\newcolumntype{C}[1]{>{\centering\let\newline\\\arraybackslash\hspace{0pt}}m{#1}}
\newcolumntype{R}[1]{>{\raggedleft\let\newline\\\arraybackslash\hspace{0pt}}m{#1}}
\let\MYcaption\@makecaption
\let\@makecaption\MYcaption
\newacronym{wrt}{w.r.t.}{with respect to}
\newacronym{RHS}{R.H.S.}{right-hand side}
\newacronym{LHS}{L.H.S.}{left-hand side}
\newacronym{iid}{i.i.d.}{independent and identically distributed}
\crefname{equation}{}{}
\Crefname{equation}{}{}
\crefname{claim}{claim}{claims}
\crefname{step}{step}{steps}
\crefname{line}{line}{lines}
\crefname{dmath}{}{}
\crefname{dseries}{}{}
\crefname{dgroup}{}{}
\crefname{Theorem}{Theorem}{Theorems}
\crefname{Corollary}{Corollary}{Corollaries}
\crefname{Proposition}{Proposition}{Propositions}
\crefname{Lemma}{Lemma}{Lemmas}
\crefname{Definition}{Definition}{Definitions}
\crefname{Example}{Example}{Examples}
\crefname{Assumption}{Assumption}{Assumptions}
\crefname{Remark}{Remark}{Remarks}
\crefname{Rem}{Remark}{Remarks}
\crefname{remarks}{Remarks}{Remarks}
\crefname{Theorem_A}{Theorem}{Theorems}
\crefname{Corollary_A}{Corollary}{Corollaries}
\crefname{Proposition_A}{Proposition}{Propositions}
\crefname{Lemma_A}{Lemma}{Lemmas}
\crefname{Definition_A}{Definition}{Definitions}
\newtheorem{Theorem}{Theorem}
\newtheorem{Corollary}{Corollary}
\newtheorem{Proposition}{Proposition}
\newtheorem{Lemma}{Lemma}
\newtheorem{Theorem}{Theorem}
\newtheorem{Definition}{Definition}
\newtheorem{Assumption}{Assumption}
\theoremstyle{remark}
\theoremstyle{plain}
\DeclarePairedDelimiter\abs{\lvert}{\rvert}
\newcommand{\qednew}{\nobreak \ifvmode \relax \else
      \ifdim\lastskip<1.5em \hskip-\lastskip
      \hskip1.5em plus0em minus0.5em \fi \nobreak
      \vrule height0.75em width0.5em depth0.25em\fi}
\newcommand{\cond}[2]{\left. {#1}\, \middle| \, {#2} \right.}
\DeclareDocumentCommand \P { g d() g } {%
	\IfNoValueTF {#3} 
	{%
		\IfNoValueTF {#1} 
		{%
			\IfNoValueTF {#2}
			{%
				\mathbb{P}%
			}%
			{%
				\mathbb{P}\left({#2}\right)%
			}%
		}%
		{%
			\IfNoValueTF {#2}
			{%
				\mathbb{P}_{#1}%
			}%
			{%
				\mathbb{P}_{#1}\left({#2}\right)%
			}%		
		}%
	}%
	{%
		\IfNoValueTF {#1} 
		{%
			\mathbb{P}\left(\cond{#2}{#3}\right)%
		}%
		{%
			\mathbb{P}_{#1}\left(\cond{#2}{#3}\right)%
		}%	
	}%
}
\DeclareDocumentCommand \E { g o g } {%
	\IfNoValueTF {#3} 
	{%
		\IfNoValueTF {#1} 
		{%
			\IfNoValueTF {#2}
			{%
				\mathbb{E}%
			}%
			{%
				\mathbb{E}\left[{#2}\right]%
			}%
		}%
		{%
			\IfNoValueTF {#2}
			{%
				\mathbb{E}_{#1}%
			}%
			{%
				\mathbb{E}_{#1}\left[{#2}\right]%
			}%		
		}%
	}%
	{%
		\IfNoValueTF {#1} 
		{%
			\mathbb{E}\left[\cond{#2}{#3}\right]%
		}%
		{%
			\mathbb{E}_{#1}\left[\cond{#2}{#3}\right]%
		}%	
	}%
}
\definecolor{gray90}{gray}{0.9}
	\newcommand{\msout}[1]{\text{\color{green} \sout{\ensuremath{#1}}}}
	\newcommand{\del}[1]{{\color{green}\ifmmode \msout{#1}\else\sout{#1}\fi}}
	\newcommand{\msout}[1]{#1}
	\newcommand{\del}[1]{#1}
\newcommand{\hide}[1]{}
	 \def\@testdef #1#2#3{%
		 \def\reserved@a{#3}\expandafter \ifx \csname #1@#2\endcsname
		\reserved@a  \else
	 \typeout{^^Jlabel #2 changed:^^J%
	 \meaning\reserved@a^^J%
	 \expandafter\meaning\csname #1@#2\endcsname^^J}%
	 \@tempswatrue \fi}
\pgfplotsset{compat=1.16}
\begin{document}

\title{FRGNN: Mitigating the Impact of Distribution Shift on Graph Neural Networks via Test-Time Feature Reconstruction}

\author{Rui Ding, Jielong Yang, Ji Feng, Xionghu Zhong, Linbo Xie
	% <-this % stops a space
	% <-this % stops a space
	}

% The paper headers
%\markboth{Journal of \LaTeX\ Class Files,~Vol.~14, No.~8, August~2021}%
%{Shell \MakeLowercase{\textit{et al.}}: A Sample Article Using IEEEtran.cls for IEEE Journals}

%\IEEEpubid{0000--0000/00\$00.00~\copyright~2021 IEEE}
% Remember, if you use this you must call \IEEEpubidadjcol in the second
% column for its text to clear the IEEEpubid mark.

\maketitle

\begin{abstract}
	Due to inappropriate sample selection and limited training data, a distribution shift often exists between the training and test sets. This shift can adversely affect the test performance of Graph Neural Networks (GNNs). Existing approaches mitigate this issue by either enhancing the robustness of GNNs to distribution shift or reducing the shift itself. However, both approaches necessitate retraining the model, which becomes unfeasible when the model structure and parameters are inaccessible. To address this challenge, we propose FR-GNN, a general framework for GNNs to conduct feature reconstruction. FRGNN constructs a mapping relationship between the output and input of a well-trained GNN to obtain class representative embeddings and then uses these embeddings to reconstruct the features of labeled nodes. These reconstructed features are then incorporated into the message passing mechanism of GNNs to influence the predictions of unlabeled nodes at test time. Notably, the reconstructed node features can be directly utilized for testing the well-trained model, effectively reducing the distribution shift and leading to improved test performance. This remarkable achievement is attained without any modifications to the model structure or parameters. We provide theoretical guarantees for the effectiveness of our framework. Furthermore, we conduct comprehensive experiments on various public datasets. The experimental results demonstrate the superior performance of FRGNN in comparison to multiple categories of baseline methods.
\end{abstract}

\begin{IEEEkeywords}
graph neural network, distribution shift 
\end{IEEEkeywords}

\section{Introduction}
\IEEEPARstart{N}{ode} classification task is of paramount significance in numerous research domains, such as social  networks~\cite{c:32, c:33}, recommendation systems~\cite{c:11, c:12}, fraud detection~\cite{c:31, c:16,c:17}, and fault diagnosis~\cite{c:30, c:14, c:15}. Due to its remarkable ability in processing graph-structured data, Graph Neural Networks (GNNs) have been extensively employed for node classification tasks~\cite{c:18}. At their core, GNNs operate on a message-passing mechanism, transmitting node feature information to neighboring nodes.

However, due to inappropriate training sample selection~\cite{c:20} and the limited availability of training samples~\cite{c:19}, there exists a distribution shift between training and test nodes in graph-structured data. This shift seriously affects the performance of GNNs~\cite{c:7}. Thus, mitigating the impact of distribution shift on classification performance is a crucial issue.

A considerable amount of research has investigated the effects of distribution shift on node classification performance of GNNs~\cite{c:7}. A notable direction among these efforts is enhancing model robustness to improve the model adaptability to the shift. LGD-GNN~\cite{c:9} proposed a decoupled Graph Neural Network model. This model employs a neighbor routing mechanism to obtain different representations in the latent space. By minimizing the correlations among these representations, the method derives decoupled representations. This method extracts representations from graph data that are optimal for classification, enhancing the generalization capability of GNNs. Following a similar motivation, GCN-DVD ~\cite{c:10} utilizes causal inference. By introducing a de-correlation regularization layer, GCN-DVD effectively removes spurious correlations (shortcuts) found in the training set, leading to improved prediction stability in the test phase. WT-AWP ~\cite{c:21} incorporates adversarial weight perturbation into GNN training and proposes the weight truncated algorithm to address the vanishing gradient issue. By minimizing the loss under the worst-case weight perturbation, WT-AWP reduces sensitivity to input variations, enhancing the robustness of model. Although the aforementioned methods can mitigate the negative effects of distribution shift on node classification tasks, they fail to effectively diminish the bias itself. SR-GNN~\cite{c:20} addresses the distribution shift through regularization of the hidden layers of the standard GNN model. The core idea of SR-GNN is to minimize the feature distribution of biased nodes and the feature distribution of independently and identically distributed nodes, thereby enabling the model to learn environment-invariant graph representations.

In light of the aforementioned limitations, various studies have explored the adoption of data augmentation techniques to enrich the diversity of training instances. This approach seeks to directly address the challenge of distribution shift. GAUG~\cite{c:22} employs an edge predictor to modify the relationships between nodes. Such modifications aim to augment the training data from a graph structure perspective. Consequently, this method boosts the generalization performance of GNNs. Building on GAUG, MH-AUG~\cite{c:23} addresses the issue of uncontrollable augmentation intensity (degree of structural change) found in GAUG, offering flexible control over the augmentation strength suitable for various datasets. KDGA~\cite{c:24} points out that both GAUG and MH-AUG might lead to the negative augmentation problem and introduces a knowledge distillation method for data augmentation. KDGA trains a teacher model with augmented data and transfers its knowledge to a student model. The student model is then tested on the original dataset to mitigate the impact of the negative augmentation problem. LA-GNN~\cite{c:19} augments graph data from the perspective of node features. This method employs generative models to learn the feature distribution of neighboring nodes given a central node. The generated node features are then appended to the original node features to facilitate data augmentation. The methods previously described necessitate retraining of models with augmented data to identify a potentially optimal model constructures or set of parameters. However, when the graph data and trained parameters are obscured due to confidentiality concerns, or when the extensive data volume renders retraining computationally onerous, such approaches become infeasible. Hence, mitigating the adverse effects of distribution shift without altering the model structure or its trained parameters remains a challenge.

To mitigate the adverse effects of distribution shift without modifying the given model structure or parameters, adjusting node features during the test phase to make the embeddings of test nodes more similar to those of training nodes of the same class emerges as a viable method. However, this method faces three primary challenges: (1) How to modify node features so that the embeddings of test nodes closely resemble those of training nodes within the same class; (2) How to mitigate the shift while maintaining the original semantics of the node features; (3) How to theoretically validate the effectiveness of this method.

To address these challenges, we conducted an in-depth analysis of the characteristics of GNNs of the inductive node classification task. In our pursuit, we emphasize the importance of strategically manipulating the features of labeled nodes during the test phase. By doing so, we aspire to ensure that, subsequent to the message passing procedure, the embeddings of test nodes draw closer in similarity to the embeddings of training nodes from the same class. Such alignment of embeddings can effectively facilitate the classification of test nodes. To achieve this goal, we propose a general framework for GNNs to conduct feature reconstruction called FR-GNN. In this framework, the modified features for labeled nodes are derived by mapping the actual labels back into the feature space. We also dive into its theoretical grounding. Specifically, we provide theoretical assurances for the validity of this framework. The following are the primary contributions of this paper:

\begin{itemize}
	\item To tackle Challenge 1, we propose a feature reconstruction framework for GNNs. This framework is adaptable to a broad range of GNN methodologies without necessitating modifications to the underlying GNN architecture. Additionally, there is no requirement for adjusting the trained parameters, ensuring its versatility and applicability across different contexts. Such a design ensures its flexibility and portability.
	\item For Challenge 2, our proposed framework only modifies the features of labeled nodes during feature reconstruction, without altering the original semantics of the node features. This approach ensures the interpretability of the model.
	\item In response to challenge 3, our analysis elucidates that by substituting the features of labeled nodes with class representative embeddings, the discrepancy between the embeddings of test nodes and those of training nodes from an identical class is diminished.
\end{itemize}

\section{Method}
    In this section, we introduce our proposed feature reconstruction framework for GNNs. Initially, we formally define the issue of reconstructing node features to minimize the detrimental impact on node classification caused by the distribution shift between test node embeddings and the embeddings of training nodes belonging to the same class. We then delve into the sub-problems inherent to this issue, namely: What kind of node features should replace the original ones to reduce such embeddings bias? And, how do we identify such node features for feature reconstruction? Subsequently, we provide a theoretical proof that class representative embeddings can meet the requirements. Based on the theoretical analysis, we outline the specific implementation method for this framework.

\subsection{Problem Statement}
    GNNs are negatively impacted by distribution shift during test. After undergoing message passing, when the embeddings of training nodes of the same class differ significantly from the test node embeddings, the performance of node classification can be severely impaired. Therefore, mitigating the distribution shift is equivalent to diminishing the discrepancy between embeddings of labeled nodes and unlabeled nodes of the same class.

    To reduce the embeddings bias without altering the model structure or trained parameters, reconstructing node features is a viable approach. By altering the features of the labeled nodes, the embeddings of test nodes adapt accordingly upon aggregation due to the inherent message passing mechanism. We aim to diminish the distance between the embeddings of test nodes after feature reconstruction and the original embeddings of the labeled nodes used for training. To discribe this distance, provide a definition for the metric of embeddings bias.

    \begin{Definition}
        \textbf{Graph embeddings Bias(GEB)}. Let's denote the embeddings matrix of labeled nodes used for training as $H_{train}$. The embeddings of test nodes, related to the node feature matrix $X$, can be symbolized as $\Phi(X)$, where $\Phi(\cdot)$ denotes message passing function. The set of node labels is denoted as $\mathcal{C}$. The training and test nodes belonging to the $c^{th}$ class are $\mathcal{N}_c$ and $\mathcal{M}_c$ respectively. The GEB is then defined as:
        \begin{equation}
            D(X)=\sum_{c\in \mathcal{C}}\sum_{i\in \mathcal{M}_c}\min _{j\in \mathcal{N}_c} \lVert \Phi(X)[i]-H_{train}[j]\rVert_2
        \end{equation}
    \end{Definition}
    This metric represents the summation of the distances between each element in the test embeddings and its closest counterpart in the training embeddings. We assume that the nearest training embeddings to a given test embeddings falls under the same category. A larger value of this metric indicates a more pronounced bias between the training and test embeddings from the same class. Consequently, a graph neural network trained on the training set is more likely to underperform when evaluated on the test set.

    Specifically, the GEB is a function of the node feature matrix $X$. Our objective is to identify a new matrix $X^*$ to replace $X$, reducing the GEB. Mathematically, this can be represented as:
    \begin{equation}
        D(X^*)\leq D(X)
    \end{equation}

    To address this issue, two key questions need to be clarified:

    \begin{itemize}
        \item What conditions should $X^*$ satisfy? \label{Q1}
        \item How can $X^*$ be found without altering the well-trained GNN model? \label{Q2}
    \end{itemize}

    To address the first question, we need to clarify the characteristics of $X^*$ that can achieve the objective from a theoretical perspective. To tackle the second question, we need to identify a practical approach for computing $X^*$, while keeping the GNN model structure and trained parameters unchanged. We will now elaborate on these two issues.

\subsection{What conditions should $X^*$ satisfy?}
    In this section, we have conducted a theoretical analysis on the conditions that $X^*$ should satisfy. We first postulate a conjecture for $X^*$ under ideal conditions. Based on this conjecture, we define a class representative embeddings for constructing $X^*$. Subsequently, we theoretically prove the relationship between the embeddings of test nodes and the embedding expectation of their respective classes. Additionally, we elucidate the connection between the embedding expectation and the class representative embeddings.Leveraging these insights, we demonstrate that employing $X^*$ as a substitute for the original feature matrix $X$ during test can mitigate the bias between the embeddings of test nodes and training nodes.

    \begin{figure}[]
        \centering
        \includegraphics[width=1\columnwidth]{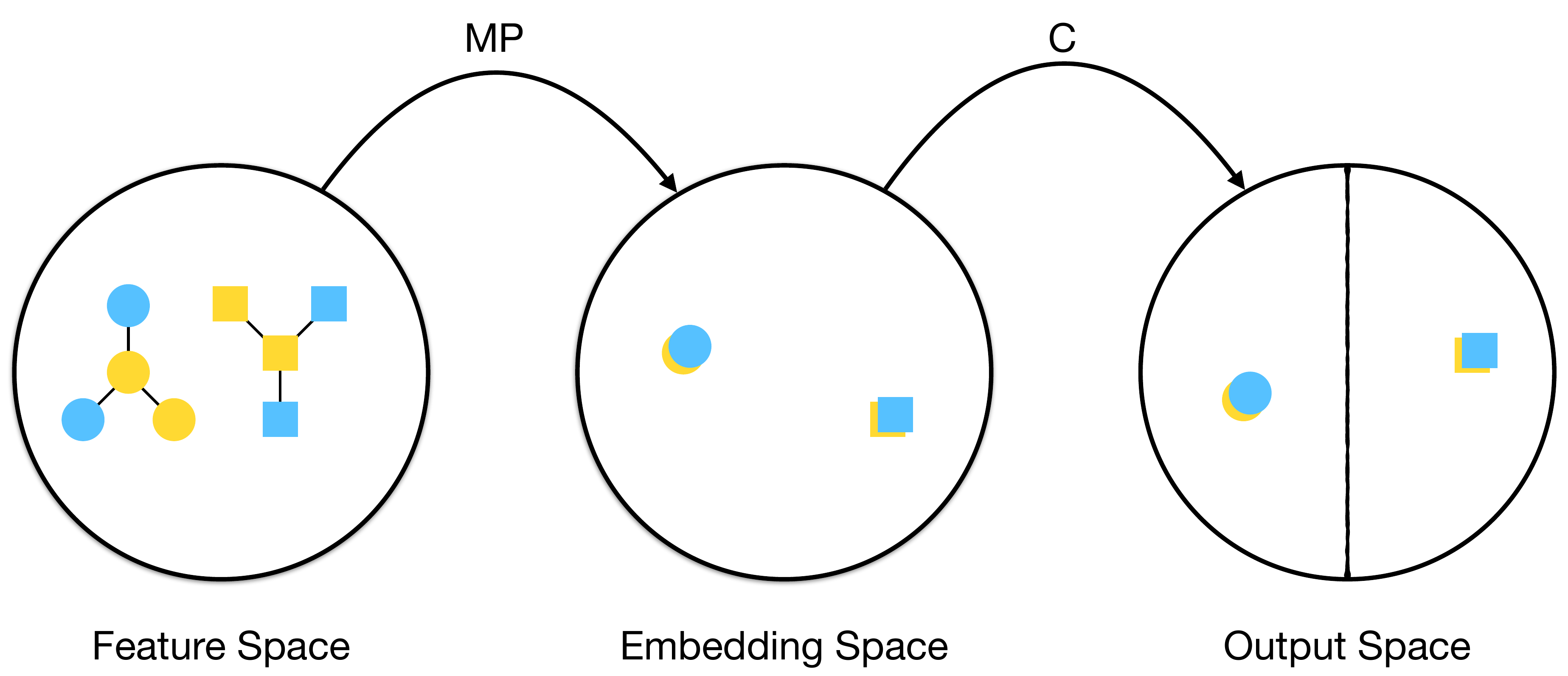}
        \caption{The graph-structured data in an ideal scenario. Shapes represent node categories, while lines between nodes indicate adjacency relationships. Blue denotes training nodes, and yellow signifies test nodes. MP stands for Message Passing, and C denotes the Trainable Classifier.}
        \label{fig1}
    \end{figure}

    To better articulate our rationale for selecting $X^*$, let us first illustrate with a toy example. Consider an ideal graph-structured data, as shown in Figure \ref{fig1}. In the figure, shapes represent node categories, while lines between shapes signify adjacency relationships between nodes. Blue indicates training nodes, and yellow denotes test nodes. We consider an extreme scenario. In this scenario, nodes of the same category share identical features. The feature disparity between nodes of different categories is pronounced. Furthermore, connections are present only between nodes of the same category. After undergoing Message Passing, the embeddings of all nodes of the same category in the embedding space become consistent, while nodes of different categories map to distinct regions. Such a setup can yield excellent node classification results, attributable to the following two reasons:
    \begin{itemize}
        \item Embeddings of nodes from different categories in the embedding space are separable.
        \item Embeddings of test nodes and training nodes in the latent space are identical.
    \end{itemize}

    While real-world graph data may hardly meet such assumptions, a well-trained GNN should be adept at classifying embeddings of training nodes in the embedding space. Hence, for such a well-trained GNN, we should be able to identify a representative embedding for each category from the embeddings of training nodes or their linear combinations. Such embedding should be easily classifiable. If the embeddings of test nodes is close to the representative embeddings of their respective categories and these representative embeddings for different categories are separable, test nodes remain effectively classified. Consequently, our goal is to ensure that by replacing the features of labeled nodes with representative embeddings, the embeddings of test nodes align closely with the representative embeddings of their categories.
    We introduce the definition of the class representative embedding.

    \begin{Definition}
        \textbf{Class Representative Embedding}. If the embedding $h_{c_i}^*$ of node $i$ with the label $c_i$ satisfies the following conditions:
        \begin{equation}
            \lVert C(h_{c_i}^*)- c_i \rVert _2\leq \epsilon,
        \end{equation}
        where $C(\cdot)$ denotes the trainable classifier in GNNs and $\epsilon$ denotes a small positive number. Then, $h_{c_i}^*$ is the class representative embedding of class $c_i$.
    \end{Definition}

    Initially, we prove that the embedding of test nodes will center around the embedding expectation of nodes in its respective category. We make the following assumptions:

    \begin{Assumption}
        \label{ass1}
        The node feature $x_i$ of the node $i$ in graph data follows $x_i\sim F_{c_i}$, where $c_i$ denotes the category to which node $i$ belongs and $F_{c_i}$ represents the feature distribution of nodes in category $c_i$. The labels of the neighbor nodes of node $i$ are independently sampled $deg(i)$ times from the distribution $D_{c_i}$, where $D_{c_i}$ is the label distribution of the neighbors of node $i$ and $deg(i)$ is the degree of node $i$.
    \end{Assumption}

    Under the aforementioned assumptions, we conduct a theoretical analysis on the Message Passing mechanism. The node embedding $h_i$ after Message Passing can be represented by the following equation:
    \begin{equation}
            h_i=\sum_{j\in N_i\cup \{i\}}a_{ij}x_j,
    \end{equation}
    where $a_{ij}$ denotes the weight of the edge between node $i$ and node $j$ and $N_i$ denotes the set of neighboring nodes for node $i$. Using this formula, we can obtain the embedding expectation $e_{c_i}$ of class $c_i$. We present the expression for $e_{c_i}$ in Lemma \ref{lemma1}.

    \begin{Lemma}
        \label{lemma1}
        Under Assumption 1, the embedding expectation $e_{c_i}$ of class $c_i$ can be represented as:
        \begin{equation}
                e_{c_i}=\sum_{j\in N_i\cup \{i\}}a_{ij}\cdot \mathbb{E}_{c\sim D_{c_i}, x\sim F_{c}}[x],
        \end{equation}
    \end{Lemma}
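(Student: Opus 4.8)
The plan is to compute $e_{c_i}$ directly as the expectation of the post--message-passing embedding $h_i$ under the randomness prescribed by Assumption~\ref{ass1}, and then to simplify by combining linearity of expectation with the law of total expectation. Since the message-passing formula $h_i=\sum_{j\in N_i\cup\{i\}}a_{ij}x_j$ is linear in the node features and the weights $a_{ij}$ are treated as deterministic, the computation should reduce cleanly to evaluating the expectation of a single feature vector.

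First I would write $e_{c_i}=\mathbb{E}[h_i]$ and substitute the message-passing expression, so that linearity of expectation gives $e_{c_i}=\sum_{j\in N_i\cup\{i\}}a_{ij}\,\mathbb{E}[x_j]$. This moves the entire argument onto the evaluation of the term $\mathbb{E}[x_j]$ for a generic neighbor $j$, after which the stated formula follows by factoring a common value out of the sum.

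The key step is the evaluation of $\mathbb{E}[x_j]$ for a neighbor of node $i$. By Assumption~\ref{ass1} the label $c_j$ of a neighbor is drawn from $D_{c_i}$, and conditionally on $c_j=c$ the feature obeys $x_j\sim F_c$. Conditioning first on the sampled label and applying the tower property yields $\mathbb{E}[x_j]=\mathbb{E}_{c\sim D_{c_i}}\big[\mathbb{E}_{x\sim F_c}[x]\big]=\mathbb{E}_{c\sim D_{c_i},\,x\sim F_c}[x]$, a quantity independent of the particular neighbor $j$. Substituting this common value back into the sum and factoring it out produces exactly the claimed expression for $e_{c_i}$.

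The main obstacle I anticipate is the treatment of the self-term $j=i$. The central node's feature obeys $x_i\sim F_{c_i}$, so strictly $\mathbb{E}[x_i]=\mathbb{E}_{x\sim F_{c_i}}[x]$, which need not coincide with the neighbor average $\mathbb{E}_{c\sim D_{c_i},\,x\sim F_c}[x]$ unless the label distribution $D_{c_i}$ concentrates on $c_i$. To arrive at the uniform factorization in the statement one must therefore either invoke the homophily built into the ideal scenario of Figure~\ref{fig1} --- so that $D_{c_i}$ places its mass on $c_i$ and the self-term matches the neighbor terms --- or fold the self-loop into the neighbor sampling so that every index in $N_i\cup\{i\}$ shares the same expectation. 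I would flag this alignment explicitly and justify it under the paper's operating assumptions, since it is the only place where the clean common-factor form could otherwise break down.
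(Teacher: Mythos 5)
Your proof is correct and takes essentially the same route as the paper's: write $e_{c_i}=\mathbb{E}[h_i]$, apply linearity of expectation, and evaluate each $\mathbb{E}[x_j]$ by the tower property over the neighbor-label sampling $c\sim D_{c_i}$, $x\sim F_c$ (the paper merely dresses up the linearity step with an indicator-function rewriting over all nodes, which is cosmetic). The self-term subtlety you flag --- that $x_i\sim F_{c_i}$ need not have the same expectation as the neighbor-label mixture --- is a genuine issue, but it is one the paper's own proof silently glosses over by treating the $j=i$ term identically to the neighbor terms, so your explicit handling of it is if anything more careful than the original.
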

    \begin{proof}
        Please refer to Appendix A.
    \end{proof}

    Building upon Lemma \ref{lemma1}, we aim to use the embedding expectation of each category to analyze the relationship between the representation of any given node and the embedding expectation of its respective category. We present the following theorem:

    \begin{Theorem}
        \label{th1}
        Under Assumption 1, for any node $i$ with the label $c_i$, the relationship between its embedding $h_i$ in the embedding space and the embedding expectation $e_{c_i}$ of its category is given by:
        \begin{equation}
            \label{eq1}
            P(\lVert h_i - e_{c_i}\rVert\geq t)\leq 2l\cdot \exp(-\frac{t^2}{2\sigma^2\cdot l\cdot \text{deg}(i)\cdot \gamma^2})
        \end{equation}
        where $t$ denotes a positive number, $l$ denotes the dimension of node features, $\sigma$ denotes the upper bound of any dimension of node features, $\text{deg}(i)$ denotes the degree of node $i$, and $\gamma$ denotes the maximum weight of the edge between node $i$ and node $j$.
    \end{Theorem}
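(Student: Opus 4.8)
The plan is to prove the statement as a coordinate-wise concentration result combined with a union bound over the $l$ feature dimensions. The key observation is that both $h_i$ and $e_{c_i}$ are linear combinations of the neighbor features with the \emph{same} weights $a_{ij}$, so their difference decomposes coordinate-by-coordinate into a sum of independent, mean-zero, bounded random variables --- exactly the setting in which Hoeffding's inequality applies.

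First I would reduce the vector statement to a scalar one. Since $\lVert h_i - e_{c_i}\rVert_2 \geq t$ forces, by the pigeonhole principle, at least one coordinate $k\in\{1,\dots,l\}$ to satisfy $|h_i[k]-e_{c_i}[k]|\geq t/\sqrt{l}$, the event $\{\lVert h_i - e_{c_i}\rVert_2 \geq t\}$ is contained in the union of the per-coordinate events, and a union bound gives
\begin{equation}
    P\!\left(\lVert h_i - e_{c_i}\rVert_2 \geq t\right) \leq \sum_{k=1}^{l} P\!\left(|h_i[k]-e_{c_i}[k]|\geq t/\sqrt{l}\right).
\end{equation}

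Next, for a fixed coordinate $k$, I would use the expression for $e_{c_i}$ from Lemma~\ref{lemma1} to write $h_i[k]-e_{c_i}[k]=\sum_{j\in N_i\cup\{i\}} a_{ij}\bigl(x_j[k]-\mathbb{E}[x_j[k]]\bigr)$. Under Assumption~\ref{ass1} the neighbor labels are drawn independently and the features are conditionally independent given the labels, so these summands are independent with mean zero. Since each feature coordinate is bounded in magnitude by $\sigma$ and each edge weight by $\gamma$, every summand lies in an interval of width at most $2\gamma\sigma$, and there are at most $\text{deg}(i)$ of them; Hoeffding's inequality then gives
\begin{equation}
    P\!\left(|h_i[k]-e_{c_i}[k]|\geq t/\sqrt{l}\right) \leq 2\exp\!\left(-\frac{t^2}{2\sigma^2\,l\,\text{deg}(i)\,\gamma^2}\right).
\end{equation}
Substituting this into the union bound and summing over the $l$ coordinates reproduces the claimed prefactor $2l$ together with the stated exponent.

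The step I expect to be the main obstacle is justifying the independence and boundedness precisely enough to pin down the exact constant in the exponent. One must verify that $e_{c_i}[k]$ is genuinely the mean of $h_i[k]$ so that the centered sum has mean zero, treat the self-term for node $i$ consistently with the neighbor terms (the expression in Lemma~\ref{lemma1} applies the neighbor-label distribution uniformly across the summation), and confirm that the per-summand width $2\gamma\sigma$ together with $\text{deg}(i)$ summands yields exactly the denominator $2\sigma^2\,l\,\text{deg}(i)\,\gamma^2$ after the $t/\sqrt{l}$ substitution. The remaining manipulations are routine applications of the union bound and Hoeffding's inequality.
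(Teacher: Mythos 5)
Your proof is correct and arrives at exactly the stated prefactor $2l$ and exponent, but it is not the paper's argument, and the difference is worth spelling out. The paper's proof (Appendix~B) first applies the triangle inequality over the neighbor sum, reducing the event to $\{\max\{a_{ij}\}\sum_{j}\lVert x_j-\mathbb{E}[x]\rVert_2\ge t\}$, then passes to coordinates with threshold $t/(\sqrt{l}\cdot\max\{a_{ij}\})$, and finally invokes a version of ``Hoeffding's inequality'' stated for $P(\sum_j\lvert z_j-\mathbb{E}[z_j]\rvert\ge s)$. Both of those intermediate steps are problematic: the event inclusion $\{\sum_j\lVert v_j\rVert_2\ge s\}\subseteq\bigcup_{k}\{\sum_j\lvert v_j[k]\rvert\ge s/\sqrt{l}\}$ is false in general (take $v_j=\epsilon e_j$ with $l=\mathrm{deg}(i)$ coordinates and $s=l\epsilon$: the left event holds, yet every coordinate sum equals $\epsilon<\sqrt{l}\,\epsilon$; a correct inclusion needs the weaker threshold $s/l$), and Hoeffding's inequality does not apply to a sum of absolute deviations, whose mean grows linearly in the number of summands and which therefore does not concentrate near zero. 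Your ordering --- pigeonhole on $\lVert h_i-e_{c_i}\rVert_2$ \emph{first}, so that each coordinate event concerns the signed centered sum $\sum_j a_{ij}\bigl(x_j[k]-\mathbb{E}[x][k]\bigr)$, and only then standard two-sided Hoeffding --- avoids both pitfalls and is the rigorous route to precisely the bound the theorem claims; in effect you have repaired the paper's proof rather than reproduced it. The caveats you flag are the right ones and are shared with the paper: the self-term $j=i$ is mean-zero only under Lemma~1's convention that it is drawn from the same mixture as the neighbors, the summand count is really $\mathrm{deg}(i)+1$ rather than $\mathrm{deg}(i)$ (the paper silently drops the $+1$), and the width $2\gamma\sigma$ per summand presumes features in $[-\sigma,\sigma]$; under those same conventions your arithmetic reproduces the stated constant exactly.
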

    \begin{proof}
        Please refer to Appendix B.
    \end{proof}

    Theorem \ref{th1} demonstrates that for graph neural networks satisfying Assumption \ref{ass1}, the node embeddings derived from Message Passing are close to the embedding expectation of their respective categories with high probability. This embedding expectation is correlated with both the node feature distribution and the label distribution of neighbor nodes of that category. 

    Furthermore, we prove that by replacing the features of labeled nodes with class representative embeddings, the embedding expectation will draw closer to the class representative embeddings of its category. To substantiate this theorem, we present the following lemma:

    \begin{Lemma}
        \label{lemma2}
        Let the set $\mathcal{A}=\{a_1,a_2,\dots,a_n\}$, where $a$ is independently and identically distributed. We randomly select $m$ elements from $A$ and replace them with $b$. Let $\epsilon$ be a small positive number. If $\lVert \sum_{i=1}^{n-m}a_i-(n-m)\cdot \mu(\mathcal{A})\rVert_2 \leq \epsilon$, the new set $\mathcal{B}$ satisfies:
        \begin{equation}
            \begin{aligned}
                 \lVert \mu(\mathcal{A})-y\rVert_2 - \lVert \mu(\mathcal{B})-y\rVert_2 \geq m \cdot \lVert \mu(\mathcal{A})-y\rVert_2-\epsilon
            \end{aligned}
        \end{equation}
        where $\mu(\cdot)$ denotes the mean operater.
    \end{Lemma}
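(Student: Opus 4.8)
The plan is to compute $\mu(\mathcal{B})$ in closed form and then recognize the new mean's displacement from the target as a contracted copy of the old displacement plus a small controlled error. Throughout I would identify the replacement value $b$ with $y$, since in the intended application $y$ is exactly the class representative embedding that is substituted for the original node features; under this reading the lemma is the statement that the mean is pulled toward $y$ once $m$ entries are overwritten by $y$. (For an arbitrary $b\neq y$ a residual term $\tfrac{m}{n}(b-y)$ survives and the one-sided inequality fails, so this identification is essential.)

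First I would relabel the retained elements as $a_1,\dots,a_{n-m}$ and write
\[
\mu(\mathcal{B}) \;=\; \frac{1}{n}\Big(\sum_{i=1}^{n-m} a_i + m\,y\Big).
\]
The hypothesis $\lVert \sum_{i=1}^{n-m} a_i - (n-m)\,\mu(\mathcal{A})\rVert_2 \leq \epsilon$ then lets me replace the retained sum by $(n-m)\,\mu(\mathcal{A})$ up to an error vector $\delta$ with $\lVert\delta\rVert_2\leq\epsilon$. Substituting and subtracting $y$ from both sides gives the key identity
\[
\mu(\mathcal{B}) - y \;=\; \frac{n-m}{n}\big(\mu(\mathcal{A}) - y\big) + \frac{1}{n}\,\delta,
\]
so the new displacement is a shrunk copy of $\mu(\mathcal{A})-y$ plus a term of norm at most $\epsilon/n$.

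The remaining step is routine: applying the triangle inequality to this identity yields $\lVert \mu(\mathcal{B})-y\rVert_2 \leq \tfrac{n-m}{n}\lVert\mu(\mathcal{A})-y\rVert_2 + \tfrac{\epsilon}{n}$, and rearranging produces a lower bound on $\lVert\mu(\mathcal{A})-y\rVert_2 - \lVert\mu(\mathcal{B})-y\rVert_2$ of the form $\tfrac{m}{n}\lVert\mu(\mathcal{A})-y\rVert_2 - \tfrac{\epsilon}{n}$. I note that the \acrshort{iid} assumption and the random choice of the $m$ overwritten entries are not actually needed for this deterministic inequality; they serve only to make the hypothesis on the retained sum plausible, since for i.i.d.\ samples the retained sample sum concentrates around $(n-m)\,\mu(\mathcal{A})$.

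The main obstacle I expect is reconciling the normalization: the natural computation above produces the factor $\tfrac{m}{n}$ (and error $\tfrac{\epsilon}{n}$), whereas the statement carries the bare factor $m$ and error $\epsilon$. The crux of matching the claimed inequality exactly is therefore to pin down the convention for $\mu(\cdot)$ and account for the $1/n$ factors — i.e.\ to decide whether the stated bound tacitly rescales by $n$, or whether the intended quantity is the unnormalized sum rather than the mean. Once that convention is fixed, every remaining step is a direct application of the triangle inequality to the displacement identity above.
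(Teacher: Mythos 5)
Your derivation is correct, but there is nothing in the paper to compare it against: the paper never actually proves Lemma 2. The lemma's pointer sends the reader to Appendix C, which is the proof of Theorem 2 and merely \emph{invokes} Lemma 2 as a black box, while the appendix titled ``Proof of Lemma 2'' (Appendix B) in fact contains the Hoeffding argument for Theorem 1. So your argument --- write $\mu(\mathcal{B})$ in closed form, substitute the hypothesis on the retained sum, and apply the triangle inequality to the displacement identity
\begin{equation*}
\mu(\mathcal{B}) - y \;=\; \frac{n-m}{n}\bigl(\mu(\mathcal{A}) - y\bigr) + \frac{1}{n}\,\delta,
\qquad \lVert \delta \rVert_2 \le \epsilon,
\end{equation*}
is the only complete proof on the table, and it is the right one.

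The two issues you flagged are genuine defects of the statement, not gaps in your proof. First, $y$ is indeed undefined in the lemma and must be identified with the replacement value $b$; this matches the paper's own use in Appendix C, where the overwritten features and the comparison point are both the class representative embedding $h_c^*$, and you are right that for $b \neq y$ the surviving term $\frac{m}{n}(b-y)$ kills the one-sided bound. Second, with $\mu$ the mean, the stated inequality is not just unproven but false whenever $m \ge 2$ and $\epsilon < (m-1)\lVert \mu(\mathcal{A})-y \rVert_2$: the left-hand side is at most $\lVert \mu(\mathcal{A})-y \rVert_2$, while the right-hand side then exceeds it. Your conclusion with $\frac{m}{n}\lVert \mu(\mathcal{A})-y \rVert_2 - \frac{\epsilon}{n}$ is the correct form; alternatively, multiplying your identity through by $n$ (i.e., measuring the deviations of unnormalized sums from $n\,y$ on the left) recovers the literal factors $m$ and $\epsilon$, which is presumably what the authors intended. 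Either repaired version still supports the lemma's only downstream use: in the proof of Theorem 2 nothing more than nonnegativity of the gap is extracted, and that follows from the $\frac{m}{n}$ version under the same assumption $\lVert \mu(\mathcal{A})-y \rVert_2 \ge \epsilon$ made there. You are also correct that the i.i.d.\ and random-selection hypotheses are inert --- the inequality is deterministic once the concentration hypothesis on the retained sum is assumed.
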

    \begin{proof}
        Please refer to Appendix C.
    \end{proof}

    Utilizing the Lemma \ref{lemma2}, we can prove the following theorem.

    \begin{Theorem}
        After replacing the labeled nodes with the class representative embedding of the same category, the relationship between the embedding expectation $e_c^*$ of category $c$ and the class representative embedding $h_c^*$ of category $c$ is given by:
        \begin{equation}
            \label{eq2}
            \lVert e_{c}^* - h_{c}^*\rVert \leq \lVert e_{c} - h_{c}^*\rVert
        \end{equation}
        where $e_{c}$ denotes the embedding expectation of category $c$ without such replacement.
    \end{Theorem}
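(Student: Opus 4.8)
The plan is to recognize the feature replacement as exactly the set-substitution analyzed in Lemma~\ref{lemma2}, and to apply that lemma with the target vector $y$ and the inserted value $b$ both set equal to the class representative embedding $h_c^*$. The key is that substituting $m$ elements of a set by a fixed vector $b$ and then measuring distance to $y=b$ must decrease that distance, and Lemma~\ref{lemma2} quantifies precisely this decrease.

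First I would make the correspondence between the message-passing aggregate and the abstract mean $\mu(\cdot)$ of Lemma~\ref{lemma2} explicit. By Lemma~\ref{lemma1}, the embedding expectation of class $c$ is a weighted aggregate of per-node feature contributions, $e_c=\sum_{j\in N_i\cup\{i\}}a_{ij}\,\mathbb{E}_{c'\sim D_{c},x\sim F_{c'}}[x]$. I would take $\mathcal{A}$ to be the multiset of these feature contributions arising from the neighbors of a representative node of class $c$, so that $\mu(\mathcal{A})$ is identified with $e_c$ up to the normalization fixed by the aggregation weights $a_{ij}$. The reconstruction replaces the features of the $m$ labeled neighbors by $h_c^*$; in the language of the lemma this is the substitution of $m$ elements of $\mathcal{A}$ by $b=h_c^*$, and the resulting mean $\mu(\mathcal{B})$ is identified with the post-reconstruction embedding expectation $e_c^*$.

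Next I would set $y=b=h_c^*$ and invoke Lemma~\ref{lemma2}. Its hypothesis $\lVert\sum_{i=1}^{n-m}a_i-(n-m)\mu(\mathcal{A})\rVert_2\le\epsilon$ is exactly the statement that the un-replaced (unlabeled) contributions concentrate around the class mean; under Assumption~\ref{ass1} the retained features are i.i.d. from $F_c$ weighted by $D_c$, so this concentration follows from the same argument underlying Theorem~\ref{th1}, namely that the retained partial sum is close to its expectation with high probability. With the hypothesis verified, Lemma~\ref{lemma2} yields
\begin{equation}
    \lVert e_c-h_c^*\rVert-\lVert e_c^*-h_c^*\rVert\;\ge\;m\cdot\lVert e_c-h_c^*\rVert-\epsilon .
\end{equation}
Finally I would check that the right-hand side is nonnegative: since at least one labeled node is replaced, $m\ge 1$, so $m\cdot\lVert e_c-h_c^*\rVert-\epsilon\ge\lVert e_c-h_c^*\rVert-\epsilon\ge 0$ whenever the pre-reconstruction gap exceeds the small constant $\epsilon$, which is the only regime of interest (otherwise $e_c$ already coincides with $h_c^*$ and there is nothing to improve). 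Nonnegativity then gives $\lVert e_c^*-h_c^*\rVert\le\lVert e_c-h_c^*\rVert$, as claimed.

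I expect the main obstacle to be the first step: making the identification of the weighted message-passing aggregate $e_c$ with the unweighted mean $\mu(\cdot)$ of Lemma~\ref{lemma2} fully rigorous, and confirming that the lemma's concentration hypothesis genuinely holds for the retained contributions rather than being assumed. Once that bridge is in place, the remainder is a direct substitution into Lemma~\ref{lemma2} followed by the elementary sign check above.
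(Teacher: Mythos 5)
Your proposal follows essentially the same route as the paper's proof: both reduce the claim to Lemma~\ref{lemma2} specialized to $y=b=h_c^*$, both rely on the concentration hypothesis for the un-replaced features, and both conclude with the same sign check that $m\cdot\lVert\mu(F_c)-h_c^*\rVert_2-\epsilon\geq 0$ because $h_c^*$ is typically far from the pre-reconstruction class mean while $\epsilon$ is negligible. The only notable difference is that the paper handles your acknowledged obstacle (weighted aggregate versus unweighted mean) by applying Lemma~\ref{lemma2} to the class-conditional feature means $\mu(F_c)$ and $\mu(F_c^*)$ and then substituting these into the Lemma~\ref{lemma1} expression $e_c=\sum_{j\in N_i\cup\{i\}}a_{ij}\sum_{k}p_k\,\mu(F_{c_k})$, rather than applying the lemma directly to the multiset of neighbor contributions as you do.
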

    \begin{proof}
        Please refer to Appendix D.
    \end{proof}

    Theorem 1 demonstrates that the embeddings of test nodes derived by Message Passing tend to center around the embedding expectation of their category. Theorem 2 proves that after replacing the features of labeled nodes with the class representative embeddings of the same category, the embedding expectations draw closer to the class representative representation of their corresponding category. Combining Theorems 1 and 2, it is evident that after substituting the features of labeled nodes with the class representative embeddings, the distance between the embeddings of test nodes and the class representative embeddings of their category is reduced. Since the class representative embeddings are selected from the embeddings of training nodes or their interpolations, the bias between the embeddings of test nodes and training nodes is diminished. We have addressed the first question.$X^*$ is the node feature matrix obtained by replacing features of labeled nodes of the same category in $X$ with the class representative embeddings.

\subsection{How can $X^*$ be found without altering the well-trained GNN model?}
    In the previous section, we proved that by replacing the features of labeled nodes with class representative embeddings, we can reduce the bias between the embeddings of test nodes and training nodes. This reduction can lead to improved test performance. To identify such class representative embeddings for feature reconstruction, we introduce a feature reconstruction framework for GNNs. This framework maps genuine labels back to the feature space to pinpoint the class representative embeddings that can be most easily classified into their corresponding labels. Notably, our framework neither necessitates modifications to the existing structure nor demands adjustment of the trained parameters. Consequently, our framework ensures both interpretability and portability.

    \begin{figure}[]
        \centering
        \includegraphics[width=1\columnwidth]{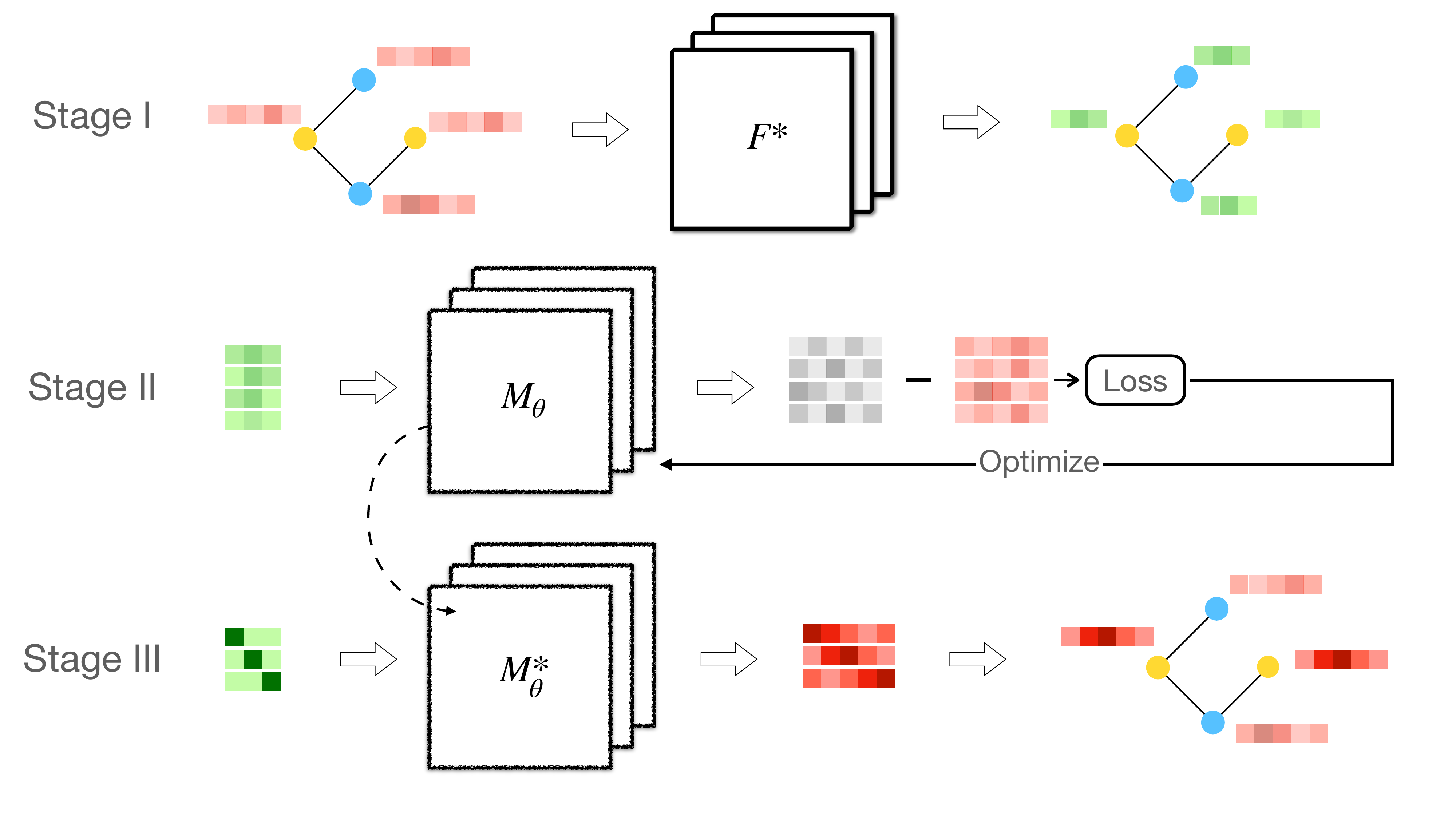}
        \caption{The overall architecture of our framework. The yellow nodes in Stage I denote the labeled nodes, while the blue nodes denote the unlabeled nodes.}
        \label{FRGNN}
    \end{figure}

    The overall architecture of our feature reconstruction framework is depicted in Figure \ref{FRGNN}. Initially, a GNN model $F^*$ is trained following the standard procedure of the selected GNN. Subsequently, we can obtain the prediction $\hat{Y}$ of all nodes with the trained model $F^*$.
    \begin{equation}
            \hat{Y} = F^*(X,A),
    \end{equation}
    where $X$ denotes the node feature matrix and $A$ denotes the adjacency matrix.
    Following that, we employ a Multi Layer Perceptron (MLP) to learn the mapping between the predictions and the features. Formally, we have:
    \begin{equation}
            \hat{X} = M_{\theta}(\hat{Y}),
    \end{equation}
    where $\hat{X}$ denotes the predicted features and $M_{\theta}$ denotes the MLP with parameters $\theta$. We choose the mean square error (MSE) as the loss function for the MLP. The loss function is given by:
    \begin{equation}
            L(X, \hat{X}) = \frac{1}{2}\lVert X-\hat{X}\rVert _2^2.
    \end{equation}
    By optimizing the loss function, we obtain the optimal parameters $\theta^*$. Feeding the ground-truth label of each category of nodes into this MLP, the output provides the class representative embedding for that category.
    \begin{equation}
            x_c^* = M_{\theta^*}(y_c),
    \end{equation}
    where $y_c$ denotes the one-hot vector of category $c$ and $x_c^*$ denotes tthe class representative embedding.
    Finally, we replace the features of the labeled nodes with the class representative embedding of their respective categories. The reconstruction feature matrix $X^*$ can be directly utilized for testing the well-trained model $F^*$. The pseudocode for the feature reconstruction process is delineated in Algorithm \ref{alg1}.
    
    \begin{algorithm}[]
        \label{alg1}
        \caption{FR-GNN Framwork}
        \textbf{Input}: Graph $G=(A,X)$, GNN Model $F^*$ trained on $G$\\
        \textbf{Output}: Class representative features $X^*$\\
        \begin{algorithmic}[1]
        \STATE $\hat{Y}$ = $F^*(X, A)$; // $\hat{Y}$ is the prediction of all nodes;
        \STATE Randomly initialize an MLP;
        \STATE Train the MLP on $\hat{Y}$ and $X$ and the learned parameter of the MLP is $\theta^*$;
        \FOR{$c=1$ to $C$}
        \STATE $x_c^* = M_{\theta^*}(y_{c})$, // $y_c$ is the one-hot vector of class $c$;
        \ENDFOR
        \STATE Replace the labeled node features with the class representative features $x_c^*$ and obtain the reconstructed feature $X^*$;
        \STATE \textbf{return} $X^*$
        \end{algorithmic}
    \end{algorithm}

    To ascertain that our framework accurately identifies the appropriate class representative embeddings, we will demonstrate that the outputs obtained via the aforementioned feature reconstruction approach indeed correspond to the desired class representative embeddings. We present Lemma \ref{lemma3} and subsequently employ it to establish the proof of Theorem \ref{th3}.

    \begin{Lemma}
        \label{lemma3}
        For a MLP with $l$ layers and a ReLU activation function, if the input satisfies $\lVert x_1 - x_2 \rVert _2^2 \leq \epsilon$, then the output satisfies $\lVert MLP(x_1) - MLP(x_2) \rVert _2^2 \leq \delta$.
    \end{Lemma}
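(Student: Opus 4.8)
The plan is to prove the statement as a Lipschitz-continuity bound: a ReLU network is a composition of affine maps and coordinatewise ReLU activations, each of which is Lipschitz, so the whole network is Lipschitz and the required $\delta$ can be taken proportional to $\epsilon$. Thus the strategy is to establish a per-layer contraction estimate and then compose the estimates by induction on the layer index.

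First I would record the two per-layer facts. For an affine layer $z \mapsto W z + b$, the difference of outputs on two inputs is $W(z_1 - z_2)$, so $\lVert W z_1 - W z_2 \rVert_2 \leq \lVert W \rVert_2 \, \lVert z_1 - z_2 \rVert_2$, where $\lVert W \rVert_2$ denotes the spectral norm of $W$; note the bias cancels and plays no role in the difference. For the activation $\text{ReLU}(u) = \max(u,0)$ applied coordinatewise, I would use that $t \mapsto \max(t,0)$ is $1$-Lipschitz on $\mathbb{R}$, hence $\lVert \text{ReLU}(u) - \text{ReLU}(v) \rVert_2 \leq \lVert u - v \rVert_2$; that is, ReLU is non-expansive.

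Then I would compose these bounds by induction. Writing the network as a composition of $l$ blocks, each an affine map followed by ReLU, and letting $L_k = \lVert W_k \rVert_2$ denote the spectral norm of the $k$-th weight matrix, the non-expansiveness of ReLU together with the affine bound shows each block is $L_k$-Lipschitz. Multiplying the per-layer constants gives $\lVert MLP(x_1) - MLP(x_2) \rVert_2 \leq \left(\prod_{k=1}^{l} L_k\right) \lVert x_1 - x_2 \rVert_2$. Squaring and invoking the hypothesis $\lVert x_1 - x_2 \rVert_2^2 \leq \epsilon$ yields $\lVert MLP(x_1) - MLP(x_2) \rVert_2^2 \leq \left(\prod_{k=1}^{l} L_k\right)^2 \epsilon$, so the claim holds with $\delta = \left(\prod_{k=1}^{l} L_k\right)^2 \epsilon$.

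Since the argument is an elementary composition of contraction estimates, there is no deep obstacle; the only care needed is bookkeeping. The point worth emphasizing is that the constant $\prod_{k=1}^{l} L_k$ is finite and determined entirely by the trained weights, so for a fixed, well-trained MLP it is a fixed constant and $\delta \to 0$ as $\epsilon \to 0$ — exactly the continuity behaviour the subsequent use of the lemma (in the proof of Theorem \ref{th3}) requires. Two mild points to handle cleanly are treating the final layer consistently (the bound is unaffected by whether an output activation is applied, since ReLU only decreases distances) and noting that, because the conclusion is stated existentially in $\delta$, no optimality of the constant is needed.
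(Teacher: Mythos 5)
Your proof is correct and follows essentially the same route as the paper's: bound each affine layer by its operator norm $\lVert W\rVert_2$, use the $1$-Lipschitz (non-expansive) property of ReLU, and compose the per-layer estimates to get a Lipschitz constant equal to the product of the weight-matrix norms. If anything, your bookkeeping is slightly cleaner than the paper's, since you correctly square the product when passing to the squared-norm hypothesis (giving $\delta = \bigl(\prod_{k} L_k\bigr)^2 \epsilon$), whereas the paper states $\delta = \epsilon \cdot \prod_{l}\lVert W^{(l)}\rVert_2$ without the square.
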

    \begin{proof}
        Please refer to Appendix E.
    \end{proof}

    \begin{Theorem}
        \label{th3}
        If $\lVert MLP(\hat{Y})-X \rVert _2^2 \leq \epsilon$ and $x_c^*=MLP(y_c)$, then $\lVert C(x_c^*) - y_c \rVert _2^2 \leq \delta$, where $C(\cdot)$ denotes a trainable classifier in GNNs, $x_c^*$ denotes the class representative embedding of category $c$, and $y_c$ denotes the one-hot vector of category $c$.
    \end{Theorem}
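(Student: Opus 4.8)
The plan is to chain three ingredients: the MLP reconstruction guarantee in the hypothesis, the Lipschitz stability of the MLP established in Lemma \ref{lemma3}, and the fact that the underlying GNN $F^*$ is well-trained, so that passing $x_c^* = MLP(y_c)$ through the classifier $C$ returns something within $\delta$ of $y_c$. First I would fix a training node $i$ whose true label is class $c$ and invoke the ``well-trained'' property of $F^*$: its prediction on a training node is close to the one-hot target, i.e. $\lVert \hat{Y}[i] - y_c \rVert_2^2 \leq \epsilon_1$ for a small $\epsilon_1$. This is the step where the training-accuracy hypothesis actually enters, so if it is not already granted earlier I would record it as a standing assumption at the start of the proof.

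Next I would feed both $\hat{Y}[i]$ and $y_c$ through the MLP and apply Lemma \ref{lemma3}: since their inputs differ by at most $\epsilon_1$, the outputs satisfy $\lVert MLP(\hat{Y}[i]) - MLP(y_c) \rVert_2^2 = \lVert \hat{X}[i] - x_c^* \rVert_2^2 \leq \delta_1$ for a correspondingly small $\delta_1$. Reading the reconstruction hypothesis $\lVert MLP(\hat{Y}) - X \rVert_2^2 \leq \epsilon$ per node as $\lVert \hat{X}[i] - X[i] \rVert_2^2 \leq \epsilon$ and combining with the previous bound via the triangle inequality, I obtain $\lVert x_c^* - X[i] \rVert_2 \leq \sqrt{\epsilon} + \sqrt{\delta_1}$; that is, the reconstructed class feature $x_c^*$ is close to the genuine feature of a class-$c$ training node.

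Finally I would push this estimate through the classifier. Using the forward relation $C(X[i]) = \hat{Y}[i]$ — the classifier applied to node $i$'s own aggregated representation is exactly its prediction — together with the continuity of $C$, which I would justify in the same manner as Lemma \ref{lemma3} since $C$ is itself a ReLU network, the closeness $x_c^* \approx X[i]$ yields $\lVert C(x_c^*) - \hat{Y}[i] \rVert_2$ small. The conclusion then follows from $\lVert C(x_c^*) - y_c \rVert_2 \leq \lVert C(x_c^*) - \hat{Y}[i] \rVert_2 + \lVert \hat{Y}[i] - y_c \rVert_2$, collecting all of the small quantities $\epsilon$, $\epsilon_1$, $\delta_1$ into a single constant $\delta$.

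The hard part will be the final step's type matching: $x_c^*$ lives in the feature space whereas $C$ nominally acts on post-message-passing embeddings, so I must either argue that message passing acts approximately as the identity on an already class-aggregated feature, or fold the message-passing map into the definition of $C$ so that the relation $C(X[i]) = \hat{Y}[i]$ is literally valid. I expect this, together with making the Lipschitz constant of $C$ explicit so the accumulated errors collapse cleanly into $\delta$, to be the most delicate point; the remainder is triangle-inequality bookkeeping layered on top of Lemma \ref{lemma3}.
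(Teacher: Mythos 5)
Your first two ingredients match the paper's proof (both use the closeness of predictions to true labels, push $\hat{y}\approx y_c$ through the MLP via Lemma \ref{lemma3}, and invoke the reconstruction hypothesis), but your final step contains a genuine gap that you flagged yet did not close, and it cannot be closed under the stated hypotheses. The relation $C(X[i])=\hat{Y}[i]$ is false in this setting: the prediction is $\hat{Y}=F^*(X,A)$, so $\hat{Y}[i]=C(\Phi(X)[i])$ where $\Phi$ is message passing over the whole graph, and $\Phi(X)[i]=\sum_{j\in N_i\cup\{i\}}a_{ij}X[j]$ is a mixture of neighbor features, not $X[i]$ itself. Your first proposed repair (message passing acts approximately as the identity on $X[i]$) requires a homophily or feature-concentration assumption that appears nowhere in the theorem; your second repair (folding $\Phi$ into $C$) changes the meaning of the conclusion, because $C(x_c^*)$ in the statement is the classifier applied to the single vector $x_c^*$, whereas the folded map needs an entire graph as input, so the quantity you would bound is no longer the one the theorem asserts. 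Consequently the chain from $x_c^*\approx X[i]$ to $C(x_c^*)\approx\hat{Y}[i]$ does not go through, and everything after it collapses.

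It is instructive to see how the paper avoids anchoring at a real training node for exactly this reason. Its proof takes an abstract class representative embedding $h_c^*$ from Definition 2, so that $\lVert C(h_c^*)-y_c\rVert_2^2\le\eta$ holds by definition, and separately \emph{assumes} the existence of a point $x$ in feature space with $C(x)=\hat{y}$ and $\lVert C(x)-y_c\rVert_2^2\le\gamma$; it then chains $x_c^*=MLP(y_c)\approx MLP(\hat{y})\approx x\approx h_c^*$ (the middle link from the reconstruction hypothesis, the last link from comparing $C(x)$ with $C(h_c^*)$) and finishes by applying the continuity of $C$ together with the defining property of $h_c^*$. In other words, the feature-versus-embedding mismatch is sidestepped by an explicit existence assumption inside the proof rather than derived, and the paper's own step from $\lVert C(x)-C(h_c^*)\rVert$ small to $\lVert x-h_c^*\rVert$ small uses Lemma \ref{lemma3} in the reverse (inverse-Lipschitz) direction, which that lemma does not provide. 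So the interface you identified as "the hard part" is indeed the crux; if you want a proof in your more concrete style, you would need to add an assumption playing the role of the paper's postulated $x$ (or a hypothesis that $\Phi(X)[i]\approx X[i]$ for the chosen node), and state it up front rather than leave it as an open step.
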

    \begin{proof}
        Please refer to Appendix F.
    \end{proof}

    Theorem 3 establishes that the outputs derived from our framework indeed correspond to the desired class representative embeddings.

\section{Experiments}
    In this section, we evaluate the performance of the proposed FR-GNN framework in the context of semi-supervised node classification tasks. Comprehensive experiments are conducted across several publicly available datasets to validate the efficacy of our framework. Moreover, to demonstrate the portability of our framework, we integrate it with various foundational GNN models. Specifically, our experiments aim to address the following three research questions:
    \begin{itemize}
        \item  How does FR-GNN perform in the context of semi-supervised node classification tasks?
        \item After feature reconstruction, can we observe a reduction in the embedding bias between training and test nodes?
        \item Which nodes are correctly classified as a direct consequence of the feature reconstruction?
    \end{itemize}

    \begin{table}[]
        \centering
        \caption{Overall datasets statistics}
        \begin{tabular}{cccccc}
        \hline
        Dataset & Cora & Citeseer & Pubmed & ogb-arxiv\\ \hline
        \#Nodes & 2,708 & 3,327 & 19,717 & 169,343\\
        \#Edges & 5,429 & 4,732 & 44,338 & 1,166,243\\
        \#Features & 1,433 & 3,703 & 500 & 128\\
        \#Classes & 7 & 6 & 3 & 40\\
        \#Label Rate & 5.2\% & 3.6\% & 0.3\% & 5.0\%\\ \hline
        \#Train Nodes & 140 & 120 & 60 & 8,467\\
        \#Val Nodes & 500 & 500 & 500 & 29,799\\
        \#Test Nodes & 1,000 & 1,000 & 1,000 & 48,603\\ \hline
        \label{tab2}
        \end{tabular}
    \end{table}

    \begin{table*}[]
        \centering
        \caption{Results(\%) for the semi-supervised node classification task on the four datasets. Training Split represents the method for selecting training nodes. 'Random' means that training nodes are chosen randomly, while 'Bias' means that training nodes are selected using the Scalable Biased Sampler~\cite{c:20}. Biased training samples simulate the situation with distribution shift. Each result is reported as average $\pm$ standard deviation across 100 experiments(10 different biased training sets $\times$ 10 random initializations). \textbf{OOM} stands for Out of Memory. The best results are highlighted in bold. The symbol $*$ indicates that the performance after utilizing the framework surpassed that of the base model.}
        \begin{tabular}{lclllll}
        \hline
        Method & Training split & Cora & Citeseer & Pubmed & ogb-arxiv\\ \hline
        GCN & Random & 80.92 $\pm$ 0.78 & 70.93 $\pm$ 0.84 & 79.31 $\pm$ 0.53 & 69.15 $\pm$ 0.63\\ \hline
        GCN & Bias & 69.24 $\pm$ 1.51 & 62.23 $\pm$ 1.21 & 62.93 $\pm$ 3.23 & 65.84 $\pm$ 0.82\\
        FR-GCN & Bias & 74.12 $\pm$ 1.44$^*$ & 63.47 $\pm$ 1.18$^*$ & \textbf{65.79 $\pm$ 3.53}$^*$ & \textbf{67.35 $\pm$ 0.62}$^*$\\ \hline
        GAT & Bias & 72.24 $\pm$ 2.14 & 63.56 $\pm$ 1.28 & 60.81 $\pm$ 3.98 & 63.45 $\pm$ 0.86\\
        FR-GAT & Bias & 76.84 $\pm$ 2.03$^*$ & 66.49 $\pm$ 1.36$^*$ & 62.59 $\pm$ 3.75$^*$ & 65.32 $\pm$ 0.73$^*$\\ \hline
        GraphSage & Bias & 67.10 $\pm$ 4.46 & 61.00 $\pm$ 1.43 & 60.60 $\pm$ 4.24 & 62.46 $\pm$ 0.77\\
        FR-GraphSage & Bias & 70.98 $\pm$ 3.98$^*$ & 66.65 $\pm$ 0.89$^*$ & 61.06 $\pm$ 4.01$^*$ & 62.70 $\pm$ 0.76$^*$\\ \hline
        APPNP & Bias & 73.31 $\pm$ 2.98 & 64.14 $\pm$ 2.58 & 63.31 $\pm$ 3.52 & 65.58 $\pm$ 0.73\\
        FR-APPNP & Bias & \textbf{78.11 $\pm$ 2.60}$^*$ & \textbf{66.73 $\pm$ 2.11}$^*$ & 65.70 $\pm$ 3.57$^*$ & 66.20 $\pm$ 0.51$^*$\\ \hline
        GAUG & Bias & 73.05 $\pm$ 2.30 & 66.02 $\pm$ 2.29 & \textbf{OOM} & \textbf{OOM}\\
        MH-AUG & Bias & 75.51 $\pm$ 3.59 & 60.58 $\pm$ 3.42 & 57.73 $\pm$ 3.55 & 55.78 $\pm$ 1.23\\ 
        KDGA & Bias & 75.40 $\pm$ 3.52 & 61.58 $\pm$ 2.96 & \textbf{OOM} & \textbf{OOM}\\
        WT-AWP & Bias & 74.92 $\pm$ 2.62 & 65.72 $\pm$ 2.30 & 63.79 $\pm$ 4.23 & \textbf{OOM}\\
        SR-GNN & Bias & 76.32 $\pm$ 2.23 & 66.26 $\pm$ 1.47 & 65.38 $\pm$ 3.96 & 66.50 $\pm$ 0.60\\ \hline
        \label{tab1}
        \end{tabular}
    \end{table*}

    \subsection{Experimental Settings}
        \subsubsection{Base Models}
        The framework we propose requires a base model to function effectively. In this context, we utilize GCN~\cite{c:1}, GAT~\cite{c:2}, GraphSAGE~\cite{c:26}, and APPNP~\cite{c:28} as our base models. By prefixing 'FR' to the name of the base model, we designate the base model that has incorporated our framework. For instance, when the Base Model is GCN, we refer to it as FR-GCN.

        \subsubsection{Baselines}
        To validate the effectiveness of our framework, we compare it with three categories of methods wihch are commonly employed to address the issue of distribution shift in GNNs. The first category comprises the base models. The second category consists of robust GNNs, which are designed to make GNN models robust to distribution shift. The third category comprises the data augmentation methods, which are designed to increase the diversity and quality of training data. Specifically, we compare our framework with the following methods:
        \begin{itemize}
            \item Base models: GCN~\cite{c:1}, GAT~\cite{c:2} and GraphSAGE~\cite{c:26};
            \item Robust GNN: WT-AWP~\cite{c:21}, SR-GNN~\cite{c:20};
            \item Data Augmentation methods: GAUG~\cite{c:22}, MH-AUG~\cite{c:23}, KDGA~\cite{c:24}.
        \end{itemize}

        \begin{figure}[]
            \centering
            \includegraphics[width=1\columnwidth]{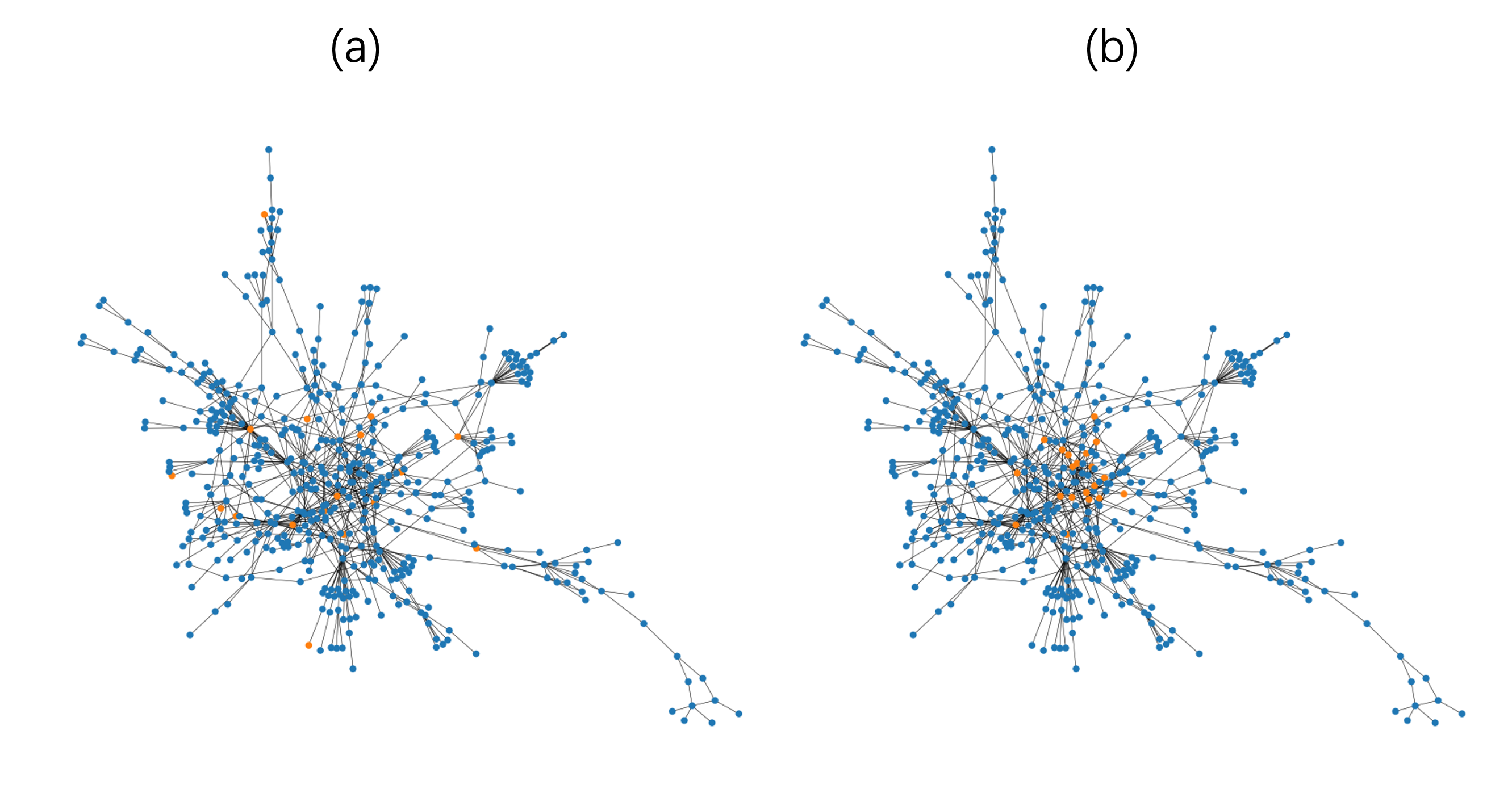}
            \caption{Visualization of a subgraph of Cora for a specific class. (a) Randomly selected training sample; (b) Biased training sample.}
            \label{fig6}
        \end{figure}

        \begin{figure}[]
            \centering
            \includegraphics[width=1\columnwidth]{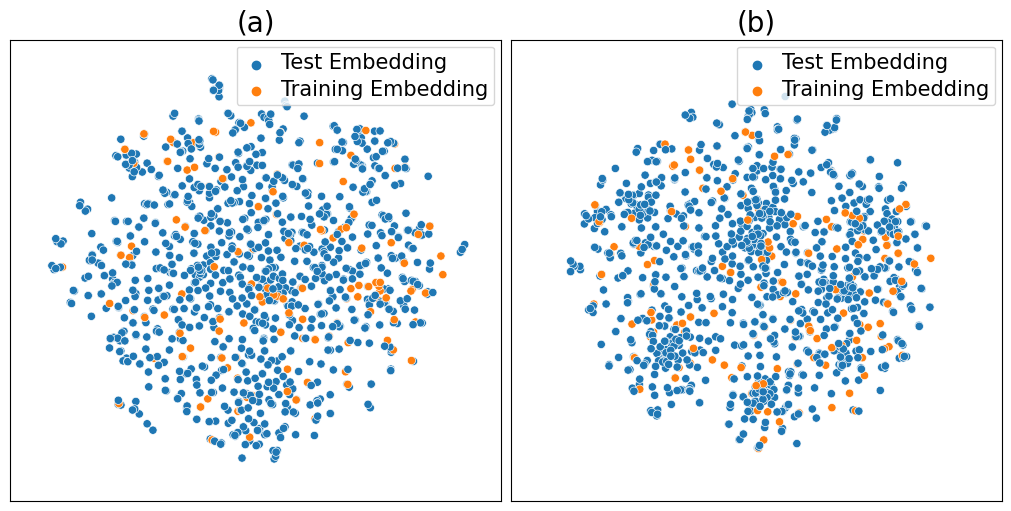}
            \caption{Visualization of the Cora dataset after dimensionality reduction with t-SNE. (a) Node embeddings derived from the original node features; (b) Node embeddings derived from the reconstructed node features. In each figure, orange points represent the training node embeddings, while blue points depict the test node embeddings.}
            \label{fig2}
        \end{figure}

        \subsubsection{Datasets}
        We conduct experiments on four popular benchmark datasets: Cora, Citeseer, Pubmed~\cite{c:27}, and ogb-arxiv~\cite{c:29}. For each dataset, we utilize the same validation and test splits as in the original papers~\cite{c:27,c:29}. Subsequently, we sample training nodes from the remaining nodes. The training nodes in the original papers are randomly selected, which could not capture distribution shift. In order to simulate distribution shift, we employed the Scalable Biased Sampler introduced in ~\cite{c:20} to acquire biased training nodes. This method utilizes the Personalized PageRank algorithm, allowing efficient sampling of biased training samples in large-scale datasets. We visualize a subgraph of Cora for a specific class in Figure \ref{fig6}, where the sampled training nodes are marked in orange. Figure \ref{fig6}(a) represents the randomly selected training nodes as in the original paper, while Figure \ref{fig6}(b) shows the biased training samples. The datasets statistics are presented in Table \ref{tab2}.

        \subsubsection{Implementation Details}
        We begin by training the base model on these datasets, subsequently using the well-trained model to obtain the prediction $\hat{y}$ for all nodes. Next, we use $\hat{y}$ as the input with the node feature $x_i$ serving as the label to train a 2-layer MLP. We employe the Adam optimizer with a learning rate of 0.001 and weight decay of 0.0005, running the training for 50 epochs. Upon completion of the training, we feed the ground truth of each class (one-hot) into the MLP. The output from the model serve as the representative embedding for each class. Subsequently, we replace the features of the training and validation nodes with the class representative embedding of their respective classes to obtain reconstructed features. Finally, we test on the well-trained GNN using this reconstructed features. Our framework is implemented with PyTorch, and all experiments were conducted on NVIDIA RTX 3090 24G GPU.

        \begin{figure}[]
            \centering
            \includegraphics[width=1\columnwidth]{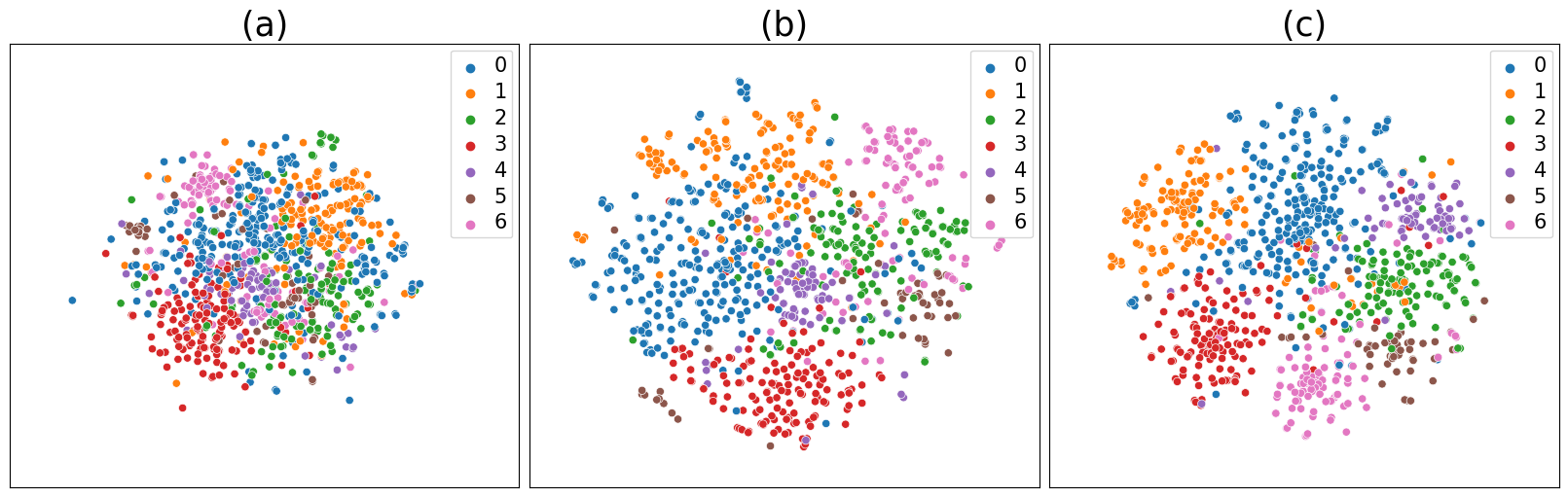}
            \caption{Visualization of the Cora dataset after dimensionality reduction with t-SNE. (a) Original node features; (b) Node embeddings derived from the original node features; (c) Node embeddings derived from the reconstructed node features. Different colored points indicate different categories.}
            \label{fig3}
        \end{figure}

    \subsection{How does FR-GNN perform in the context of semi-supervised node classification tasks?}
        The experimental results of our framework integrated with different base models are presented in Table \ref{tab1}. Across all datasets, our framework consistently improve the test performance of the base models. By employing feature reconstruction and leveraging the unique message-passing mechanism of GNNs, our framework modifies the test embedding distribution and mitigate the embedding bias between test and training nodes. The experimental results validate the efficacy and superiority of our framework. Additionally, the results underscore the portability and flexibility of our framework. Notably, our framework is constructed based on the generic representation paradigm of GNNs rather than being tailored to a particular base model.

        Table \ref{tab1} also presents a comparison between our framework and various baselines. Our framework outperforms all baselines on the Cora and Citeseer datasets. On the Pubmed dataset, our framework surpasses all baselines except for WT-AWP. We observe a less pronounced improvement across all methods on the Pubmed dataset compared to Cora and Citeseer. One primary reason for this observation is the reduced number of labeled nodes in the Pubmed dataset. Specifically, the proportion of labeled nodes in Pubmed is notably lower than in the other two datasets. This scarcity in labeled data inherently makes the semi-supervised node classification task more challenging for Pubmed. Our framework relies on the message-passing mechanism, aiming to optimize the embeddings of unlabeled nodes using those of labeled nodes. However, as the proportion of labeled nodes decreases, this optimization effect diminishes.

    \subsection{After feature reconstruction, can we observe a reduction in the embedding bias between training and test nodes?}

        We conducted a visual analysis of the node embedding derived from original features and reconstructed features of the Cora dataset. Figure \ref{fig2} visualizes the node embeddings with TSNE. Comparing subfigures (a) and (b), we can observe that after feature reconstruction, the embeddings of test nodes become more compact and closer to the embeddings of training nodes. This validates our Theorem 2 that feature reconstruction can reduce the embedding bias between the training and test nodes. Similarly, by contrasting subfigures (c) and (d), we can draw the same conclusion.

        Figure \ref{fig3} visually displays the distribution characteristics of nodes from different classes. As can be observed from subfigure 3(a), the original node features exhibit a relatively scattered distribution for each class. There is considerable overlap between features of different classes. Subfigure 3(b) showcases the node embeddings derived with Message Passing. It is evident that the node embeddings for each class are no longer dispersed, which validates our Theorem \ref{th1} that the embeddings of nodes from different classes tend to center around the average embedding of their respective class. Subfigure 3(c) presents the embeddings after feature reconstruction. As depicted, the node embeddings derived from reconstructed features have a more compact distribution with lesser overlap between features of different classes. Such characteristics are more conducive to classification.

    \subsection{Which nodes are correctly classified as a direct consequence of the feature reconstruction?}
        To further investigate the application scenarios of our framework, we analyze the test results after feature reconstruction. Specifically, we identify nodes that transitioned from misclassified to correctly classified due to feature reconstruction.

        \begin{figure}[]
            \centering
            \includegraphics[width=1\columnwidth]{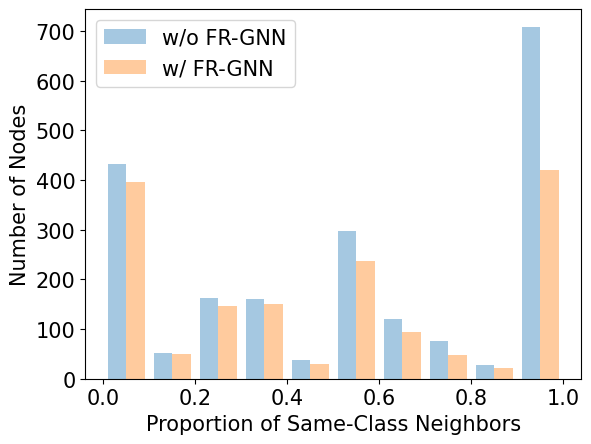}
            \caption{Number of Misclassified Nodes with Different Proportions of Same-Class Neighbors on Cora dataset. The count is the sum of the results from 10 experiments.}
            \label{fig4}
        \end{figure}

        We conducted 10 random experiments on the Cora dataset with GCN as the base model. The experimental results are displayed in Figure 4. We observe that when the proportion of same-class nodes in the neighboring nodes is high, the number of misclassified nodes reduces dramaticly after using our framework. Moreover, the majority of the performance improvement comes from these nodes. On the other hand, when the proportion of same-class nodes in the neighboring nodes is low, our framework provides limited assistance to these nodes. This observation aligns with intuition. When same-class nodes occupy a significant portion of neighbor nodes, there is a higher probability that the class representative representation directly or indirectly improves the embeddings of the same-class node. Conversely, when the proportion of same-class nodes among the neighbors is low, the positive influence is limited.

\section{Conclusion}
	In this paper, we propose a feature reconstruction framework for GNNs. Our framework can mitigate the adverse effects of distribution shift without altering the model structure or trained parameters. Specifically, our framework modifies the features of labeled nodes to influence the embeddings of other nodes via message passing mechanism at test time. Such test-time modifications exhibit excellent portability and flexibility. Subsequently, we conduct an in-depth analysis of the theoretical reason for the effectiveness of our framework. We also detail the specific implementation method for our framework. Moreover, we conduct comprehensive experiments on several publicly available datasets to validate the efficacy of our framework. The experimental results demonstrate that our framework can effectively mitigate the distribution shift and improve the test performance. 

    However, our framework has its limitations. Our framework relies on labeled nodes for feature reconstruction. If the labeled nodes are inaccessible or insufficient, our framework might fail to improve the test performance effectively. A potential avenue for future work would be determining whether it is feasible to achieve feature reconstruction during the testing phase solely based on node features.

\appendices
\section{Proof of Lemma 1} \label{Appendix A}
\begin{proof}
	For any node $i$ with the label $c_i$, $h_i$ denotes its embedding derived from message passing. We assume that the node feature is independently and identically distributed from the distribution $F_c$ when the node is with the label $c$. Then the embedding expectation $e_{c_i}$ of class $c_i$ can be represented as:
        \begin{equation*}
            \begin{aligned}
                e_{c_i} &= \mathbb{E}[h_i] \\
                & = \mathbb{E}[\sum_{j\in N_i\cup \{i\}}a_{ij}\cdot x_j] \\
                & = \mathbb{E}[\sum_{j=1}^{\abs{V}}a_{ij}\cdot x_j\cdot \mathbb{I}(j\in N_i\cup \{i\})] \\
                & = \sum_{j=1}^{\abs{V}}a_{ij}\cdot \mathbb{E}[x_j\cdot \mathbb{I}(j\in N_i\cup \{i\})] \\
                & = \sum_{j\in N_i\cup \{i\}}a_{ij}\cdot \mathbb{E}_{c\sim D_{c_i}, x\sim F_c}[x], \\
            \end{aligned}
        \end{equation*}
        where $\mathbb{I}(\cdot)$ denotes the indicator function. When the condition $j\in N_i\cup \{i\}$ is satisfied, the indicator function $\mathbb{I}(j\in N_i\cup \{i\})$ equals 1, otherwise it equals 0.
\end{proof}

\section{Proof of Lemma 2} \label{Appendix B}
\begin{proof}
	\begin{equation*}
		\begin{aligned}
			& \lVert h_i - e_{c_i}\rVert \\
            &= \lVert \sum_{j\in N_i\cup \{i\}}a_{ij}\cdot (x_j -  \mathbb{E}_{c\sim D_{c_i}, x\sim F_{c}}[x])\rVert_2 \\
			& \leq \max\{a_{ij}\}\cdot \sum_{j\in N_i\cup \{i\}} \lVert x_j -  \mathbb{E}_{c\sim D_{c_i}, x\sim F_{c}}[x]\rVert_2 \\
		\end{aligned}
	\end{equation*}

	\begin{equation*}
		\begin{aligned}
			& P(\lVert h_i - e_{c_i}\rVert\geq t) \\
			& \leq P(\max\{a_{ij}\}\cdot \sum_{j\in N_i\cup \{i\}} \lVert x_j -  \mathbb{E}_{c\sim D_{c_i}, x\sim F_{c}}[x]\rVert_2\geq t)\\
			& = P(\sum_{j\in N_i\cup \{i\}} \lVert x_j -  \mathbb{E}_{c\sim D_{c_i}, x\sim F_{c}}[x]\rVert_2\geq \frac{t}{\max\{a_{ij}\}})\\
			& \leq P(\cup_{k=1}^{l}\{ \sum_{j\in N_i\cup \{i\}}\abs{x_j[k] - \\  
            &\mathbb{E}_{c\sim D_{c_i}, x\sim F_{c}}[x][k]} \geq \frac{t}{\sqrt{l}\cdot \max\{a_{ij}\}}\})\\
			& = \sum_{k=1}^{l}P(\sum_{j\in N_i\cup \{i\}}\abs{ x_j[k] - \\
            & \mathbb{E}_{c\sim D_{c_i}, x\sim F_{c}}[x][k]} \geq \frac{t}{\sqrt{l}\cdot \max\{a_{ij}\}})\\
			&
		\end{aligned}
	\end{equation*}

	We use Hoeffding's inequality to bound the probability. Hoeffding's inequality can be expressed as:
	\begin{equation*}
		P(\sum_{i=1}^{n}\abs{z_i-\mathbb{E}[z_i]}\geq s)\leq \exp(-\frac{2s^2}{n(b-a)^2}),
	\end{equation*}
	where $z_i$ is a random variable, $a\leq z_i\leq b$, and $s>0$.

	Then we can bound the probability as:
	\begin{equation*}
		P(\lVert h_i - e_{c_i}\rVert\geq t) \leq 2l\cdot \exp(-\frac{t^2}{2\sigma^2\cdot l\cdot \text{deg}(i)\cdot \gamma^2}),
	\end{equation*}
	where $\sigma$ denotes the upper bound of any dimension of node features, $\text{deg}(i)$ denotes the degree of node $i$, and $\gamma=\max\{a_{ij}\}$.
\end{proof}

\section{Proof of Theorem 2} \label{Appendix C}
\begin{proof}
	We use the $c$-th class representative embedding $h_c^*$ to replace the labeled nodes features with the category $c$. According to Lemma 1, the embedding expectation $e_c^*$ of category $c$ can be represented as:
	\begin{equation*}
		\begin{aligned}
			e_{c}^* &= \sum_{j\in N_i\cup \{i\}}a_{ij}\cdot \mathbb{E}_{c\sim D_{c_i}, x\sim F_{c}^*}[x]. \\
		\end{aligned}
	\end{equation*}

	We can use the mean of node features with the $c$-th category to approximate the expectation of node features with the $c$-th category. Then the embedding expectation $e_c^*$ of category $c$ can be represented as:
	\begin{equation*}
		\begin{aligned}
			e_{c}^* &= \sum_{j\in N_i\cup \{i\}}a_{ij}\cdot \sum_{k=1}^{C}p_k \mu(F_{c_k}^*), \\
		\end{aligned}
	\end{equation*}
	where $p_k$ denotes the probability of the neighbor node with the $k$-th category in expectation. Then we have:
	\begin{equation*}
		e_c^* - h_c^* = \sum_{j\in N_i\cup \{i\}}a_{ij}\cdot \sum_{k=1}^{C}p_k \mu(F_{c_k}^*) - h_c^*.
	\end{equation*}

	From Lemma 2, we have:
	\begin{equation*}
		\lVert \mu(F_c)-h_c^*\rVert_2 - \lVert \mu(F_c^*)-h_c^*\rVert_2 \geq m \cdot \lVert \mu(F_c)-h_c^*\rVert_2-\epsilon.
	\end{equation*}

	Due to the fact that $h_c^*$ is obtained from a small proportion of nodes with the $c$-th category, $h_c^*$ is usually not close to $\mu(F_c)$. And $\epsilon$ is close to 0. Therefore, we have:
	\begin{equation*}
		\lVert\mu(F_c)-h_c^*\rVert_2-\epsilon \geq 0.
	\end{equation*}

	Then we have:
	\begin{equation*}
		\lVert \mu(F_c)-h_c^*\rVert_2 - \lVert \mu(F_c^*)-h_c^*\rVert_2 \geq 0.
	\end{equation*}

	Therefore, we have:
	\begin{equation*}
		\begin{aligned}
			\lVert e_{c} - h_{c}^*\rVert &= \sum_{j\in N_i\cup \{i\}}a_{ij}\cdot \sum_{k=1}^{C}p_k \mu(F_{c_k}) - h_c^* \\
			& \geq \sum_{j\in N_i\cup \{i\}}a_{ij}\cdot \sum_{k=1}^{C}p_k \mu(F_{c_k}^*) - h_c^* \\
			& = \lVert e_{c}^* - h_{c}^*\rVert.
		\end{aligned}
	\end{equation*}
\end{proof}

\section{Proof of Lemma 3} \label{Appendix D}
\begin{proof}
	For a single layer MLP, we have:
	\begin{equation*}
		f(x)=\sigma(x\cdot W+B),
	\end{equation*}
	where $x\in \mathbb{R}^{l}$ denotes the input, $W\in \mathbb{R}^{l\times h}$ denotes the weight matrix, $B\in \mathbb{R}^{h}$ denotes the bias vector, and $\sigma(\cdot)$ denotes the ReLU activation function. For any two inputs $x_1$ and $x_2$, we have:
	\begin{equation*}
		\begin{aligned}
			\lVert f(x_1)-f(x_2)\rVert _2&=\lVert \sigma(x_1\cdot W+B)-\sigma(x_2\cdot W+B)\rVert _2\\
			&\leq \lVert x_1\cdot W+B-x_2\cdot W-B\rVert _2\\
			&(ReLU \ Lipschitz \ Condition)\\
			&=\lVert (x_1-x_2)\cdot W\rVert _2\\
			&\leq \lVert x_1-x_2\rVert _2\cdot \lVert W\rVert _2\\
			&(Cauchy-Schwarz\ Inequality).\\
		\end{aligned}
	\end{equation*}

	For a MLP with $l$ layers, we have:
	\begin{equation*}
		\begin{aligned}
			f^{(l)}(x)&=\sigma(h^{(l)}\cdot W^{(l)}+B^{(l)})\\
			F^{(L)}(x)&=f^{(L)}(f^{(L-1)}(\cdots f^{(1)}(x)\cdots)),
		\end{aligned}
	\end{equation*}
	where $h^{(l)}$ denotes the output of the $l$-th layer, $W^{(l)}$ denotes the weight matrix of the $l$-th layer, and $B^{(l)}$ denotes the bias vector of the $l$-th layer. For any two inputs $x_1$ and $x_2$, we have:
	\begin{equation*}
		\begin{aligned}
			&\lVert F^{(L)}(x_1)-F^{(L)}(x_2)\rVert _2\\
			&=\lVert f^{(L)}(f^{(L-1)}(\cdots f^{(1)}(x_1)\cdots))-\\
			&f^{(L)}(f^{(L-1)}(\cdots f^{(1)}(x_2)\cdots))\rVert _2\\
			&\leq \lVert f^{(L-1)}(\cdots f^{(1)}(x_1)\cdots)-f^{(L-1)}(\cdots f^{(1)}(x_2)\cdots)\rVert _2\\
			&\leq \lVert f^{(L-2)}(\cdots f^{(1)}(x_1)\cdots)-\\
			&f^{(L-2)}(\cdots f^{(1)}(x_2)\cdots)\rVert _2\cdot \lVert W^{(L-1)}\rVert _2\\
			&\leq \cdots\\
			&\leq \lVert x_1-x_2\rVert _2\cdot \prod_{l=1}^{L-1}\lVert W^{(l)}\rVert _2.\\
		\end{aligned}
	\end{equation*}

	Therefore, if the input satisfies $\lVert x_1 - x_2 \rVert _2^2 \leq \epsilon$, then the output satisfies $\lVert MLP(x_1) - MLP(x_2) \rVert _2^2 \leq \delta$, where $\delta=\epsilon\cdot \prod_{l=1}^{L-1}\lVert W^{(l)}\rVert _2$.
\end{proof}

\section{Proof of Theorem 3} \label{Appendix E}
\begin{proof}
	Let $h_c^*$ denote the class representative embedding of category $c$. We have:
	\begin{equation*}
		\lVert C(h_c^*) - y_c \rVert _2^2 \leq \eta,
	\end{equation*}
	where $C(\cdot)$ denotes a trainable classifier in GNNs and $\eta$ is a small positive number. We assume that there exists $x$ in the node feature space which satisfies:
	\begin{equation*}
		C(x)=\hat{y},\lVert C(x) - y_c \rVert _2^2 \leq \gamma,
	\end{equation*}
	where $\gamma$ is a small positive number. Then we have:
	\begin{equation*}
		\lVert C(x) - C(h_c^*) \rVert _2^2 \leq \eta + \gamma.
	\end{equation*}

	Based on Lemma 3,
	\begin{equation*}
		\lVert x - h_c^* \rVert _2^2 \leq \eta + \gamma.
	\end{equation*}

	When the MLP is well trained and $\lVert x-MLP(\hat{y})\rVert \leq \epsilon$. Therefore, we have:
	\begin{equation*}
		\lVert MLP(\hat{y}) - h_c^* \rVert _2^2 \leq \eta + \gamma + \epsilon.
	\end{equation*}

	Moreover, we can use Lemma 3 and $\lVert \hat{y}-y_c\lVert _2^2\leq \gamma$ to obtain:
	\begin{equation*}
		\lVert MLP(\hat{y}) - MLP(y_c) \rVert _2^2 \leq \theta,
	\end{equation*}
	where $\theta$ is a small positive number. Therefore, we have:
	\begin{equation*}
		\lVert MLP(y_c) - h_c^* \rVert _2^2 \leq \eta + \gamma + \epsilon + \theta.
	\end{equation*}

	When $\eta + \gamma + \epsilon + \theta$ is small enough, we can obtain:
	\begin{equation*}
		\lVert x_c^* - h_c^*\rVert _2^2 \leq \delta,
	\end{equation*}
	where $\delta$ is a small positive number. Based on Lemma 3, we can obtain the conclusion:
	\begin{equation*}
		\lVert C(x_c^*) - y_c \rVert _2^2 \leq \delta.
	\end{equation*}
\end{proof}

%{\appendices
%\section*{Proof of the First Zonklar Equation}
%Appendix one text goes here.
% You can choose not to have a title for an appendix if you want by leaving the argument blank
%\section*{Proof of the Second Zonklar Equation}
%Appendix two text goes here.}

\bibliographystyle{IEEEtran}
\bibliography{ijcai22}

% Generated by IEEEtran.bst, version: 1.14 (2015/08/26)
\begin{thebibliography}{10}
\providecommand{\url}[1]{#1}
\csname url@samestyle\endcsname
\providecommand{\newblock}{\relax}
\providecommand{\bibinfo}[2]{#2}
\providecommand{\BIBentrySTDinterwordspacing}{\spaceskip=0pt\relax}
\providecommand{\BIBentryALTinterwordstretchfactor}{4}
\providecommand{\BIBentryALTinterwordspacing}{\spaceskip=\fontdimen2\font plus
\BIBentryALTinterwordstretchfactor\fontdimen3\font minus \fontdimen4\font\relax}
\providecommand{\BIBforeignlanguage}[2]{{%
\expandafter\ifx\csname l@#1\endcsname\relax
\typeout{** WARNING: IEEEtran.bst: No hyphenation pattern has been}%
\typeout{** loaded for the language `#1'. Using the pattern for}%
\typeout{** the default language instead.}%
\else
\language=\csname l@#1\endcsname
\fi
#2}}
\providecommand{\BIBdecl}{\relax}
\BIBdecl

\bibitem{c:32}
Y.~Wu, H.-N. Dai, and H.~Tang, ``Graph neural networks for anomaly detection in industrial internet of things,'' \emph{IEEE Internet of Things J.}, vol.~9, no.~12, pp. 9214--9231, 2022.

\bibitem{c:33}
K.~Wang, J.~An, M.~Zhou, Z.~Shi, X.~Shi, and Q.~Kang, ``Minority-weighted graph neural network for imbalanced node classification in social networks of internet of people,'' \emph{IEEE Internet of Things J.}, vol.~10, no.~1, pp. 330--340, 2023.

\bibitem{c:11}
C.~Gao, Y.~Zheng, N.~Li, Y.~Li, Y.~Qin, J.~Piao, Y.~Quan, J.~Chang, D.~Jin, X.~He, and Y.~Li, ``A survey of graph neural networks for recommender systems: Challenges, methods, and directions,'' \emph{ACM Trans. Recomm. Syst.}, vol.~1, no.~1, pp. 1--51, Mar 2023.

\bibitem{c:12}
S.~Wu, F.~Sun, W.~Zhang, X.~Xie, and B.~Cui, ``Graph neural networks in recommender systems: A survey,'' \emph{ACM Comput. Surv.}, vol.~55, no.~5, pp. 1--37, Dec 2022.

\bibitem{c:31}
Y.~Wu, H.-N. Dai, and H.~Tang, ``Graph neural networks for anomaly detection in industrial internet of things,'' \emph{IEEE Internet of Things J.}, vol.~9, no.~12, pp. 9214--9231, 2022.

\bibitem{c:16}
Y.~Liu, X.~Ao, Z.~Qin, J.~Chi, J.~Feng, H.~Yang, and Q.~He, ``Pick and choose: A gnn-based imbalanced learning approach for fraud detection,'' in \emph{Proc. of the Web Conf.}, 2021, pp. 3168--3177.

\bibitem{c:17}
P.~Li, H.~Yu, X.~Luo, and J.~Wu, ``Lgm-gnn: A local and global aware memory-based graph neural network for fraud detection,'' \emph{IEEE Trans. on Big Data}, vol.~9, no.~4, pp. 1116--1127, 2023.

\bibitem{c:30}
Y.~Wu, H.-N. Dai, and H.~Tang, ``Graph neural networks for anomaly detection in industrial internet of things,'' \emph{IEEE Internet of Things J.}, vol.~9, no.~12, pp. 9214--9231, 2022.

\bibitem{c:14}
Z.~Yu, C.~Zhang, and C.~Deng, ``An improved gnn using dynamic graph embedding mechanism: A novel end-to-end framework for rolling bearing fault diagnosis under variable working conditions,'' \emph{Mech. Syst. and Signal Process.}, vol. 200, p. 110534, 2023.

\bibitem{c:15}
T.~Li, Z.~Zhou, S.~Li, C.~Sun, R.~Yan, and X.~Chen, ``The emerging graph neural networks for intelligent fault diagnostics and prognostics: A guideline and a benchmark study,'' \emph{Mech. Syst. and Signal Process.}, vol. 168, p. 108653, 2022.

\bibitem{c:18}
Z.~Wu, S.~Pan, F.~Chen, G.~Long, C.~Zhang, and P.~S. Yu, ``A comprehensive survey on graph neural networks,'' \emph{{IEEE} Trans. on Neural Netw. and Learn. Syst.}, vol.~32, no.~1, pp. 4--24, Jan 2021.

\bibitem{c:20}
Q.~Zhu, N.~Ponomareva, J.~Han, and B.~Perozzi, ``Shift-robust gnns: Overcoming the limitations of localized graph training data,'' in \emph{Advances in Neural Inf. Proc. Syst.}, 2021, pp. 27\,965--27\,977.

\bibitem{c:19}
S.~Liu, R.~Ying, H.~Dong, L.~Li, T.~Xu, Y.~Rong, P.~Zhao, J.~Huang, and D.~Wu, ``Local augmentation for graph neural networks,'' in \emph{Int. Conf. on Mach. Learn.}, 2022, pp. 14\,054--14\,072.

\bibitem{c:7}
H.~Li, X.~Wang, Z.~Zhang, and W.~Zhu, ``Out-of-distribution generalization on graphs: A survey,'' \emph{ArXiv:2202.07987}, 2022.

\bibitem{c:9}
J.~Guo, K.~Huang, X.~Yi, and R.~Zhang, ``Learning disentangled graph convolutional networks locally and globally,'' \emph{IEEE Trans. on Neural Netw. and Learn. Syst.}, pp. 1--12, 2022.

\bibitem{c:10}
S.~Fan, X.~Wang, C.~Shi, K.~Kuang, N.~Liu, and B.~Wang, ``Debiased graph neural networks with agnostic label selection bias,'' \emph{IEEE Trans. on Neural Netw. and Learn. Syst.}, pp. 1--12, 2022.

\bibitem{c:21}
Y.~Wu, A.~Bojchevski, and H.~Huang, ``Adversarial weight perturbation improves generalization in graph neural networks,'' in \emph{Proc. AAAI Conf. on Artif. Intell.}, 2023, pp. 10\,417--10\,425.

\bibitem{c:22}
T.~Zhao, Y.~Liu, L.~Neves, O.~Woodford, M.~Jiang, and N.~Shah, ``Data augmentation for graph neural networks,'' in \emph{Proc. AAAI Conf. on Artif. Intell.}, 2021, pp. 11\,015--11\,023.

\bibitem{c:23}
H.~Park, S.~Lee, S.~Kim, J.~Park, J.~Jeong, K.-M. Kim, J.-W. Ha, and H.~J. Kim, ``Metropolis-hastings data augmentation for graph neural networks,'' in \emph{Advances in Neural Inf. Proc. Syst.}, 2021, pp. 19\,010--19\,020.

\bibitem{c:24}
L.~Wu, H.~Lin, Y.~Huang, and S.~Z. Li, ``Knowledge distillation improves graph structure augmentation for graph neural networks,'' in \emph{Advances in Neural Inf. Proc. Syst.}, 2022, pp. 11\,815--11\,827.

\bibitem{c:1}
T.~N. Kipf and M.~Welling, ``Semi-supervised classification with graph convolutional networks,'' in \emph{Int. Conf. Learn. Representations}, 2017.

\bibitem{c:2}
P.~Veličković, G.~Cucurull, A.~Casanova, A.~Romero, P.~Liò, and Y.~Bengio, ``Graph attention networks,'' in \emph{Int. Conf. Learn. Representations}, 2018.

\bibitem{c:26}
W.~L. Hamilton, R.~Ying, and J.~Leskovec, ``Inductive representation learning on large graphs,'' in \emph{Advances in Neural Information Processing Systems}, 2017, pp. 1025--1035.

\bibitem{c:28}
J.~Gasteiger, A.~Bojchevski, and S.~G{\"u}nnemann, ``Predict then propagate: Graph neural networks meet personalized pagerank,'' in \emph{Int. Conf. Learn. Representations}, 2019.

\bibitem{c:27}
P.~Sen, G.~Namata, M.~Bilgic, L.~Getoor, B.~Galligher, and T.~Eliassi-Rad, ``Collective classification in network data,'' \emph{AI Mag.}, vol.~29, no.~3, p.~93, Sep. 2008.

\bibitem{c:29}
W.~Hu, M.~Fey, M.~Zitnik, Y.~Dong, H.~Ren, B.~Liu, M.~Catasta, and J.~Leskovec, ``Open graph benchmark: Datasets for machine learning on graphs,'' \emph{ArXiv:2005.00687}, 2020.

\end{thebibliography}

\vfill

\end{document}